\DeclareMathOperator*{\argmin}{argmin}
\newtheorem{theorem}{Theorem}
\newtheorem{lemma}[]{Lemma}
\newtheorem{corollary}{Corollary}[theorem]
\newtheorem{definition}[]{Definition}
\newtheorem{problem}[]{Problem}
\newtheorem*{assumptions*}{Assumptions}
\newtheorem{assumption}[]{Assumption}
\newcounter{sideremark}
\newcommand{\gtrlesseqslant}{%
  \mathrel{\vcenter{\offinterlineskip
    \ialign{%
      \hfil$\mathsurround=0pt ##$\cr
      >\cr
      \noalign{\vskip-0.25ex}
      \leqslant\cr
    }%
  }}%
}
\title{\LARGE \bf Exploiting Trust for Resilient Hypothesis Testing with Malicious Robots}
\author{Matthew Cavorsi*, Orhan Eren Akgün*, Michal Yemini, Andrea Goldsmith, and Stephanie Gil
\thanks{*Co-primary authors}
\thanks{Matthew Cavorsi, Orhan Eren Akgün, and Stephanie Gil are with the School of Engineering and Applied Sciences, Harvard University, Cambridge, MA, USA
        {\tt\small mcavorsi@g.harvard.edu, erenakgun@g.harvard.edu, sgil@seas.harvard.edu}}%
\thanks{Michal Yemini and Andrea Goldsmith are with the Department of Electrical and Computer Engineering, Princeton University, Princeton, NJ, USA
        {\tt\small myemini@princeton.edu, goldsmith@princeton.edu}}%
\thanks{
This work was partially supported by NSF award \#CNS-2147694. M.~Cavorsi, O.~Akgün, and S.~Gil are partially supported by AFOSR award \#FA9550-22-1-0223.  M.~Yemini and A.~J.~Goldsmith are partially supported by  AFOSR award \#002484665.}
}
\begin{document}

\maketitle
\thispagestyle{empty}
\pagestyle{empty}

\begin{abstract}

We develop a resilient binary hypothesis testing framework for decision making in adversarial multi-robot crowdsensing tasks. This framework exploits stochastic trust observations between robots to arrive at tractable, resilient decision making at a centralized Fusion Center (FC) even when i) there exist malicious robots in the network and their number may be larger than the number of legitimate robots, and ii) the FC uses one-shot noisy measurements from all robots. We derive two algorithms to achieve this. The first is the Two Stage Approach (2SA) that estimates the legitimacy of robots based on received trust observations, and provably minimizes the probability of detection error in the worst-case malicious attack. Here, the proportion of malicious robots is known but arbitrary. For the case of an unknown proportion of malicious robots, we develop the Adversarial Generalized Likelihood Ratio Test (A-GLRT) that uses both the reported robot measurements and trust observations to estimate the trustworthiness of robots, their reporting strategy, and the correct hypothesis simultaneously. We exploit special problem structure to show that this approach remains computationally tractable despite several unknown problem parameters. We deploy both algorithms in a hardware experiment where a group of robots conducts crowdsensing of traffic conditions on a mock-up road network similar in spirit to Google Maps, subject to a Sybil attack. We extract the trust observations for each robot from actual communication signals which provide statistical information on the uniqueness of the sender. We show that even when the malicious robots are in the majority, the FC can reduce the probability of detection error to $30.5\%$ and $29\%$ for the 2SA and the A-GLRT respectively.

\end{abstract}

\IEEEpeerreviewmaketitle

\section{Introduction}
We are interested in the problem where robots observe the environment and estimate the presence of an event of interest. Each robot relays their measurement to a \emph{Fusion Center} (FC) that makes an informed binary decision on the occurrence of the event. An unknown subset of the system are malicious robots whose goal is to increase the likelihood that the FC makes a wrong decision~\cite{kailkhura2014asymptotic, ren2018binary, althunibat2016countering, wu2018generalized}. This problem can be cast as an adversarial binary hypothesis testing problem, with relevance to a broad class of robotics tasks that rely on distributed sensing with possibly malicious or untrustworthy robots. For example, robots might perform coordinated coverage to maximize their ability to sense events of interest~\cite{robotTrustSchwager,xu2021novel,song2020care,talay2009task}, share target information for coordinated tracking~\cite{schlotfeldt2018resilient, ramachandran2020resilience,mitra2019resilient,laszka2015resilient}, or merge map information to provide a global understanding of the environment~\cite{blumenkamp2021emergence,mitchell2020gaussian, deng2021byz, wehbe2022probabilistically}. In crowdsensing tasks such as traffic prediction, a server may use GPS data to estimate if a particular roadway is congested or not~\cite{GoogleMaps} (see \cref{fig:map_spoof}). Unfortunately, this process is vulnerable to malicious robots~\cite{kailkhura2014asymptotic, althunibat2016countering}. For example, prior works have shown that a Sybil attack can cause crowdsensing applications like Google Maps to incorrectly perceive traffic conditions, resulting in erroneous reporting of traffic flows~\cite{jeske2013floating, wang2018ghost}.

\begin{figure}[b!]
    \centering
    \includegraphics[scale=0.23]{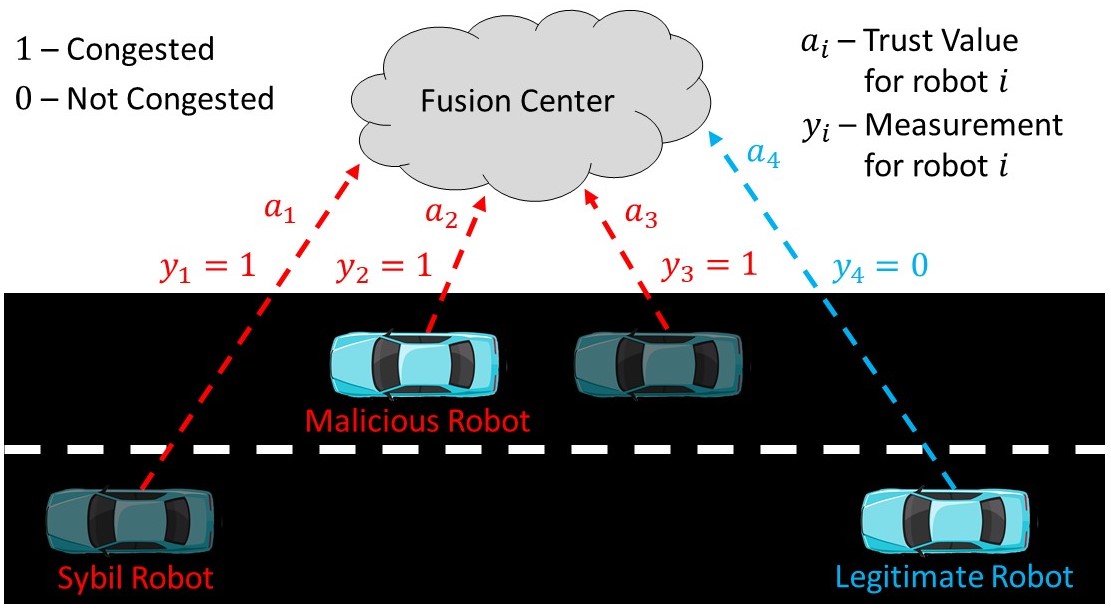}
    \caption{Malicious robots can perform a Sybil Attack to try to force a FC to incorrectly perceive traffic conditions on a road. The FC can aggregate measurements and trust values from robots to accurately estimate the true traffic condition of the road despite the attack.}
    \label{fig:map_spoof}
\end{figure}

The problem of binary adversarial hypothesis testing has been studied within the context of sensor networks~\cite{sandal2020reputation,  marano2008distributed, kailkhura2015distributed}. Many approaches use data, such as a history of measurements and hypothesis outcomes, to assess the trustworthiness of the robots~\cite{chen2008robust,nurellari2017secure, nurellari2016distributed, rawat2010collaborative}. For example, if a robot consistently disagrees with the final decision of the FC, then the FC can flag that robot as potentially adversarial. However, the success of these methods often hinges upon a crucial assumption that \emph{more than half of the network is legitimate.} A growing body of work investigates additionally sensed quantities arising from the \emph{physicality of cyberphysical systems} such as multi-robot networks, to cross-validate and assess the trustworthiness of robots~\cite{trustandRobotsSycara,robotTrustSchwager,spoofResilientCoordinationusingFingerprints,securearray}. This could include using camera feeds, GPS signals, or even the signatures of received wireless communication signals, to acquire additional information regarding \emph{the trustworthiness} of the robots~\cite{AURO,securearray,CrowdVetting}. Importantly, this class of trust observations can often be obtained from a one-shot observation, independent of the transmitted measurement. The work in~\cite{yemini2021characterizing} uses trust observations to recover resilient consensus even in the case where more than half of the network is malicious. In this paper \emph{we wish to derive a framework for adversarial hypothesis testing that exploits stochastic trust observations to arrive at a similar level of resilience; whereby, a FC can conceivably reduce its probability of error, even in the one-shot scenario and where legitimate robots do not hold a majority in the network.}

We derive algorithms for achieving resilient hypothesis testing by exploiting stochastic trust observations between the FC and a group of robots participating in event detection. We derive a framework that exploits one-shot trust observations, hereafter called \emph{trust values}, over each link to arrive at \emph{tractable, closed-form solutions} when the majority of the network may be malicious and the strategy of the malicious robots is unknown -- a challenging and otherwise intractable problem to solve in the general case~\cite{soltanmohammadi2012decentralized}. 

For the case where an upper limit on the proportion of malicious robots is known, we develop the \emph{Two Stage Approach} (2SA). In the first stage this algorithm uses trust values to determine the most likely set of malicious robots, and then applies a Likelihood Ratio Test (LRT) only over trusted robots in the second stage. We show that this approach minimizes the error probability of the estimated hypothesis at the FC for a worst-case attack scenario. For the case where an upper bound on the proportion of malicious robots is unknown, we develop the \emph{Adversarial Generalized Likelihood Ratio Test} (A-GLRT) algorithm which uses both stochastic trust values and event measurements to jointly estimate the trustworthiness of each robot, the strategy of malicious robots, and the hypothesis of the event. Our A-GLRT algorithm is based upon a common approach for decision making with unknown parameters, the Generalized Likelihood Ratio Test (GLRT), which replaces the unknown parameters with their maximum likelihood estimates (MLE)~\cite{kay_2008}. We show that the addition of trust values allows us to decouple the trustworthiness estimation from the strategy of the adversaries, allowing us to calculate the exact MLE of unknown parameters in polynomial time, instead of approximating them as in previous works~\cite{soltanmohammadi2012decentralized,sun2016optimal}. Our simulation results show that the A-GLRT empirically yields a lower probability of error than the 2SA, but at the expense of higher computational cost. 

Finally we conduct a hardware experiment based on crowdsensing traffic conditions using a group of robots under a Sybil Attack. We show that the FC can recover a performance of $30.5\%$ and $29.0\%$ error, for the 2SA and A-GLRT respectively, even in the case where more than half of the robots are malicious.

\section{Problem Formulation}
\label{sec:PF}

We consider a network of $N$ robots, where each robot is indexed by some $i \in \mathcal{N}$ and $\mathcal{N} = \{ 1, \dots, N \}$, that are deployed to sense an environment and determine if an event of interest has occurred. The event of interest is captured by the random variable $\Xi$, where $\Xi=1$ if the event has happened and $\Xi=0$ otherwise. Each robot $i$ uses its sensed information to make a local decision about whether the event has happened or not, captured by the random variable $Y_i$, where its realization $y_i = 1$ if robot $i$ believes the event has happened and $y_i = 0$ otherwise. We denote the true hypothesis by $\mathcal{H}_1$ if $\Xi = 1$ and $\mathcal{H}_0$ if $\Xi = 0$. Each robot forwards its local decision to a centralized fusion center (FC).

We are concerned with the scenario where not all robots are trustworthy, that is, some are \emph{malicious} and may manipulate the data that they send to the FC by flipping their measured bit with the goal of increasing the probability that the FC makes the wrong decision. We denote the set of malicious robots by $\mathcal{M} \subset \mathcal{N}$. The set of robots that are not malicious are termed \emph{legitimate robots}, denoted by $\mathcal{L} \subseteq \mathcal{N}$, where $\mathcal{L}\cup\mathcal{M}=\mathcal{N}$ and $\mathcal{L}\cap\mathcal{M}=\emptyset$. Additionally, we define the true trust vector, $\mathbf{t} \in \{0,1\}^N$, where $t_i = 1$ if $i \in \mathcal{L}$ and $t_i = 0$ if $i \in \mathcal{M}$. We note that the true trust vector is unknown by the FC, but it is defined for analytical purposes. We are interested in estimating this vector.

We assume the following behavioral models for legitimate and malicious robots:
\begin{definition}[Legitimate robot]
A \emph{legitimate robot} $i$ measures the event and sends its measurement $Y_i$ to the FC without altering it. We assume for each legitimate robot $i\in\mathcal{L}$, the measured bit $Y_i$ is subject to noise with the following false alarm and missed detection probabilities
\begin{equation}\label{eq:p_FA_MD_legitimate_sensor}
\begin{aligned}
P_{\text{FA},i} &= \Pr(Y_i=1|\Xi=0,t_i=1) = P_{\text{FA,L}}, \\
P_{\text{MD},i} &= \Pr(Y_i=0|\Xi=1,t_i=1) = P_{\text{MD,L}},
\end{aligned}
\end{equation} 
where $P_{\text{FA,L}} \in (0,0.5)$ and $P_{\text{MD,L}} \in (0,0.5)$ without loss of generality. We assume that all legitimate robots have homogeneous sensing capabilities, i.e., they have the same probability of false alarm and missed detection. Moreover, we assume that the measurement of a legitimate robot is independent of all other robots, and identically distributed given the true hypothesis. Finally, we also assume that $P_{\text{FA,L}}$ and $P_{\text{MD,L}}$ are known by the FC.
\end{definition}
\begin{definition}[Malicious robot]
A robot is said to be a \emph{malicious robot} if it can choose to alter its measurements before sending it to the FC.  We assume that a malicious robot $i\in\mathcal{M}$ can flip its measurement with probability $p_{\text{f}}\in [0,1]$ after making an observation, and that all malicious robots flip their bit with the same probability. Let $p_{\text{FA}, \text{M}}, p_{\text{MD}, \text{M}} \in [0,0.5)$ be the probability of false alarm and missed-detection of a malicious robot before altering the bit. We assume that all malicious robots have the same probability of false alarm and missed detection. The effective probabilities of false alarm and missed-detection of a malicious robot after altering the bit are given as:
\begin{flalign}\label{eq:p_FA_MD_malicious_sensor}
 P_{\text{FA,M}}&= \Pr(Y_i=1|\Xi=0,t_i=0) \\ &= (1-p_{\text{f}})\cdot p_{\text{FA,M}}+p_{\text{f}}\cdot (1-p_{\text{FA,M}}),\nonumber\\
P_{\text{MD,M}} &= \Pr(Y_i=0|\Xi=1,t_i=0) \\ &= (1-p_{\text{f}}) \cdot p_{\text{MD,M}}+p_{\text{f}}(1-p_{\text{MD,M}}).     \nonumber
\end{flalign}
We assume that a measurement coming from a malicious robot is independent of other measurements given the true hypothesis. This implies that malicious robots do not cooperate with each other. Furthermore, we assume that $p_{\text{FA}, \text{M}}$, $p_{\text{MD}, \text{M}}$, and the strategy of the malicious robots, which is the flipping probability $p_{\text{f}}$, are not known by the FC. This implies that the FC does not know $P_{\text{FA}, \text{M}}$ and $P_{\text{MD}, \text{M}}$ either.
\end{definition}
We use a common assumption in the literature which is that the measurements coming from malicious robots are i.i.d (see \cite{ren2018binary, wu2018generalized, marano2008distributed, kailkhura2015distributed}). In addition to the measurements $Y_i$, we assume that each $Y_i$ is tagged with a \emph{trust value} $\alpha_i \in \mathbb{R}$. Specifically, we consider the class of problems where the FC can leverage the cyber-physical nature of the network to extract an estimation of trust about each communicating robot.
\begin{definition}[Trust Value $\alpha_i$]\label{def:trust_val}
A \emph{trust value} $\alpha_i$ is a stochastic variable  that captures information about the true legitimacy of a robot $i$. We denote the set of all possible trust values (aka sample space) by $\mathcal{A}$ and denote a realization for robot $i$ by $a_i$.
\end{definition}

\begin{assumption}\label{assumption:trust_val}
We assume that the set $\mathcal{A}$ is finite and that the trust value distributions are homogeneous across all the legitimate robots $i\in\mathcal{L}$. To this end, we denote the probability mass function of the trust values of robots by $p_{\alpha}(a|t)$.
We assume the probability mass functions are known or can be estimated by the FC.\footnote{\scriptsize{ Example of a trust value $\alpha_i$: One example of such trust values comes from the works in \cite{yemini2021characterizing, AURO, CrowdVetting}. In these works, the trust values $\alpha_i \in [0,1]$ are stochastic and are determined from physical properties of wireless transmissions. We use these trust values in our hardware experiment in \cref{sec:Results} where we discretize the sample space by letting $\mathcal{A} = \{0,1\}$ and find the probability mass functions to be $p_{\alpha}(a_i=1|t_i=1) = 0.8350$ and $p_{\alpha}(a_i=1|t_i=0) = 0.1691$. Other examples of observations can be found in \cite{trustandRobotsSycara, cheng2021general, peng2012agenttms}.}} We assume that the trust values are i.i.d given the true legitimacy of the robot. 
Moreover, the trust values are assumed to be independent of the measurements, $Y_i$, and the true hypothesis. Finally, to omit trivial or noninformative cases, we assume that $p_{\alpha}(a| t = 0)\cdot p_{\alpha}(a| t = 1)\notin\{0,1\}$  for all $a\in\mathcal{A}$.
\end{assumption}

We do not impose any restrictions over the conditional probability distributions $p_{\alpha}(a | t = 1)$ and $p_{\alpha}(a | t = 0)$. However, for the trust values to be meaningful they should have different probability mass functions, i.e., $p_{\alpha}(a | t = 1) \neq p_{\alpha}(a | t = 0)$.
How distinguishable the two probability mass functions are is termed the \emph{quality} of the trust value, where a better quality corresponds to a larger distinction between the distributions $p_{\alpha}(a| t = 1)$ and $p_{\alpha}(a | t = 0)$. Based on these definitions, we provide the objective of the FC.

\subsection{The objective of the FC}

Denote the vector of all measurements with $\boldsymbol{Y}=(Y_{1},\ldots,Y_{N})$ and its realization $\boldsymbol{y}=(y_1,\ldots,y_N)$, and the vector of stochastic trust values by $\boldsymbol{\alpha}=(\alpha_{1},\ldots,\alpha_{N})$ and its realization by $\boldsymbol{a}=(a_{1},\ldots,a_{N})$. Let $\mathcal{D}_0$ and $\mathcal{D}_1$ be the decision regions at the FC. That is, $(\boldsymbol{a},\boldsymbol{y})\in\mathcal{D}_0$ if the FC chooses hypothesis $\mathcal{H}_0$ whenever it measures the  pair $(\boldsymbol{a},\boldsymbol{y})$. Similarly $(\boldsymbol{a},\boldsymbol{y})\in\mathcal{D}_1$ if the FC chooses hypothesis $\mathcal{H}_1$ whenever it measures the pair $(\boldsymbol{a},\boldsymbol{y})$. To simplify our notations we denote $\mathcal{D}:=\{\mathcal{D}_0,\mathcal{D}_1\}$.

Denote by $P_{\text{FA}}$ and $P_{\text{MD}}$ the false alarm and missed detection probabilities of the decision rule used by the FC, that is 
\begin{flalign}
&P_{\text{FA}}(\mathcal{D},\boldsymbol{t},P_{\text{FA,M}}) \nonumber\\
&\qquad= \sum_{(\boldsymbol{a},\boldsymbol{y})\in\mathcal{D}_1}\Pr(\boldsymbol{\alpha}=\boldsymbol{a},\boldsymbol{Y}=\boldsymbol{y}|\mathcal{H}_0,\boldsymbol{t},P_{\text{FA,M}}),\\
&P_{\text{MD}}(\mathcal{D},\boldsymbol{t},P_{\text{MD,M}}) \nonumber\\
&\qquad= \sum_{(\boldsymbol{a},\boldsymbol{y})\in\mathcal{D}_0}\Pr(\boldsymbol{\alpha}=\boldsymbol{a},\boldsymbol{Y}=\boldsymbol{y}|\mathcal{H}_1,\boldsymbol{t},P_{\text{MD,M}}).
\end{flalign}
Note that the false alarm and missed detection probabilities are affected by the strategy of the malicious robots, i.e., $P_{\text{FA,M}}$ and $P_{\text{MD,M}}$.

If the FC knows the true trust vector, i.e., the vector $\boldsymbol{t}$, and the probabilities $P_{\text{FA,M}}$ and $P_{\text{MD,M}}$, it could optimize the decision regions $\mathcal{D}_0$ and  $\mathcal{D}_1$ to minimize the expected error probability:
\begin{equation}
\begin{aligned}\label{eq:straightforward_obj_opt}
&P_{\text{e}}(\mathcal{D},\boldsymbol{t},P_{\text{FA,M}},P_{\text{MD,M}})= \\
&  \Pr(\Xi=0)P_{\text{FA}}(\mathcal{D},\boldsymbol{t},P_{\text{FA,M}})+\Pr(\Xi=1)P_{\text{MD}}(\mathcal{D},\boldsymbol{t},P_{\text{MD,M}}).
\end{aligned}
\end{equation}
In this case, the vector of trust values $\boldsymbol{\alpha}$ would not affect the optimal decision rule, and it would only depend on the vector of measurements $\boldsymbol{Y}$. 

However, there are two main obstacles to the optimization of the probability of error \eqref{eq:straightforward_obj_opt}, namely:
\begin{enumerate}
    \item The FC does not know the identity of the malicious robots, and thus it does not know the correct vector $\boldsymbol{t}$. Therefore, the FC needs to estimate the true trust vector, where the estimated trust vector is denoted by $\hat{\boldsymbol{t}}$.
    \item The FC does not know how the malicious robots alter their measurements before sending them. In our setup, this means that the FC does not know the values $P_{\text{FA,M}}$ and $P_{\text{MD,M}}$. Therefore the FC needs to estimate $P_{\text{FA,M}}$ and $P_{\text{MD,M}}$, where the estimates are denoted by $\hat{P}_{\text{FA,M}}$ and $\hat{P}_{\text{MD,M}}$, respectively.
\end{enumerate}

The FC needs to make a decision with these unknown parameters which is known as the composite hypothesis testing problem. Since the minimization of \eqref{eq:straightforward_obj_opt} is not tractable, we explore different ways to circumvent this issue. One way is to start by estimating the legitimacy of the robots using trust values only and assuming that the upper bound on the number of malicious robots in the network is known in order to make \eqref{eq:straightforward_obj_opt} tractable. Then, we can ignore the measurements from robots deemed to be malicious and choose the decision regions $\mathcal{D}_0$ and $\mathcal{D}_1$ using the measurements from the remaining robots. This approach leads us to the formulation in \cref{prob:prob1}.

\begin{problem} \label{prob:prob1}
Assume that the FC first estimates the identities of the robots in the network, i.e., it determines $\hat{\boldsymbol{t}}$, solely using the vector of trust values $\boldsymbol{\alpha}$. Then, the FC makes a decision about the hypothesis using only the vector of measurements $\mathbf{Y}$, from robots it identifies as legitimate. Given an upper bound $\bar{m}$ on the proportion of malicious robots in the network, we wish to determine a strategy for the FC that minimizes the following worst-case scenario under these assumptions:
\begin{flalign}
   \min_{\mathcal{D}}\max_{P_{\text{FA,M}},P_{\text{MD,M}},\boldsymbol{t}:\sum_{i\in\mathcal{N}}t_i\leq \bar{m}N} P_{\text{e}}(\mathcal{D},\boldsymbol{t},P_{\text{\emph{FA,M}}},P_{\text{\emph{MD,M}}}).
\end{flalign}

\end{problem}

The definition in \cref{prob:prob1} requires an approach that estimates the trustworthiness of a robot $i$ using only the trust value $a_i$ associated with that robot while assuming a known upper bound on the proportion of malicious robots. However, it is natural to seek additional information about the trustworthiness of the robots that can be obtained from the random measurement vector $\boldsymbol{y}$. Following this intuition, we seek a decision rule that estimates the unknown parameters in the system which are $\boldsymbol{t}$, $P_{\text{FA,M}}$, and $P_{\text{MD,M}}$ as well as the hypothesis $\mathcal{H}_0$ or $\mathcal{H}_1$ jointly, without requiring any known upper bound on the proportion of malicious robots. A common approach to hypothesis testing with unknown parameters is to use the generalized likelihood ratio test \cite{kay_2008}, that is
\begin{flalign}\label{eq:GLRT_ML_decision}
\frac{p(\boldsymbol{z};\hat{\theta}_1,\mathcal{H}_1)}{p(\boldsymbol{z};\hat{\theta}_0,\mathcal{H}_0)}\underset{\mathcal{H}_{0}}{\overset{\mathcal{H}_{1}}{\gtrlesseqslant}} \:\frac{\Pr(\Xi=0)}{\Pr(\Xi=1)} \triangleq {\gamma}_{\text{AG}},
\end{flalign}
where $\hat{\theta}_1$ is the maximum likelihood estimator (MLE) of the unknown parameter $\theta_1$ assuming $\Xi=1$ and $\hat{\theta}_0$ is the MLE of $\theta_0$ assuming $\Xi=0$. For our problem, $\boldsymbol{z}=(\boldsymbol{a},\boldsymbol{y})$, $\theta_1 = (\boldsymbol{t},P_{\text{MD,M}})$, and
$\theta_0 = (\boldsymbol{t},P_{\text{FA,M}}),$ which results in the following formulation of the test 
\begin{flalign}\label{eq:GLRT_ML_decision_joint}
\frac{\max_{\boldsymbol{t}\in\{0,1\}^N,P_{\text{MD,M}}\in[0,1]}\Pr(\boldsymbol{a},\boldsymbol{y}|\mathcal{H}_1,\boldsymbol{t},P_{\text{MD,M}})}{\max_{\boldsymbol{t}\in\{0,1\}^N,P_{\text{FA,M}}\in[0,1]}\Pr(\boldsymbol{a},\boldsymbol{y}|\mathcal{H}_0,\boldsymbol{t},P_{\text{FA,M}})}\underset{\mathcal{H}_{0}}{\overset{\mathcal{H}_{1}}{\begin{smallmatrix}>\\\leqslant\end{smallmatrix}}} \gamma_{\text{AG}}.
\end{flalign}

Note that in this setup the vector $\boldsymbol{t}$ is a parameter, thus, we do not make any prior assumption on its distribution. Calculating the MLE in the numerator and denominator in \eqref{eq:GLRT_ML_decision_joint} is not trivial since the unknown $\boldsymbol{t}$ is a discrete multidimensional variable while $P_{\text{MD,M}}$ and $P_{\text{FA,M}}$ are continuous variables. Doing this in a tractable way leads us to the formulation in \cref{prob:prob2}.

\begin{problem} \label{prob:prob2}
Find a computationally tractable algorithm that calculates the GLRT given in \eqref{eq:GLRT_ML_decision_joint}.
\end{problem}

In the next section we propose solutions to these problems. Then, we investigate the performance of both methods in Section \ref{sec:Results}, and conclude the paper in Section \ref{sec:conclusion}.

\section{Approach}
In this section we present two different approaches: one approach to solve \cref{prob:prob1} and another to solve \cref{prob:prob2}. The first approach, called the Two Stage Approach, finds the optimum decision rule that solves \cref{prob:prob1}. The second approach, called the Adversarial Generalized Likelihood Ratio Test (A-GLRT) uses both the trust values and measurements simultaneously to arrive at a final decision while estimating the unknown parameters using the maximum likelihood estimation rule. The A-GLRT approach addresses \cref{prob:prob2}. The Two Stage Approach is shown to be computationally faster than the A-GLRT, but the A-GLRT attains a lower empirical probability of error. 

\subsection{Two Stage Approach Algorithm}
\label{sec:2SA}

In this section we present an intuitive approach where we separate the detection scheme into two stages where 1) a decision is made about the trustworthiness of each individual robot $i$ based on the received value $\alpha_i$, and then 2) only the measurements $Y_i$ from robots that are trusted are used to choose $\mathcal{H}_0$ or $\mathcal{H}_1$.

\paragraph{Detection of Trustworthy Robots}
We utilize the Likelihood Ratio Test (LRT) to detect \textit{legitimate} robots. This test is guaranteed to have minimal missed detection probability (i.e., detecting a legitimate robot as malicious) for a given false alarm probability (i.e., detecting a malicious robot as legitimate) \cite[Chapter 3]{kay_2008}.

The FC decides which robots to trust using the LRT decision rule
\begin{flalign}\label{eq:classification_decision}
\frac{p_{\alpha}(a_i|t_i=1)}{p_{\alpha}(a_i|t_i=0)}\underset{\hat{t}_i = 0}{\overset{\hat{t}_i = 1}{\gtrless}} \gamma_t,
\end{flalign}
where $\gamma_t$ is a threshold value that we wish to optimize. Note that when $\gamma_t=1$ \eqref{eq:classification_decision} is equivalent to a maximum likelihood detection.

The FC decides who to trust and stores it in the vector $\mathbf{\hat{t}}$, where $\hat{t}_i = 1$ if the FC chooses to trust the robot, and $\hat{t}_i = 0$ otherwise. In the case of equality a random decision is made where the FC chooses $\hat{t}_i = 1$ with probability $p_t$ and the FC chooses $\hat{t}_i = 0$ with probability $1-p_t$, where $p_t$ is another parameter to be optimized. 
This leads to the following trust probabilities, where $P_{\text{trust,L}}(\gamma_t,p_t)$ is the probability of trusting a legitimate robot, and $P_{\text{trust,M}}(\gamma_t,p_t)$ is the probability of trusting a malicious robot:
\begin{equation}
\begin{aligned}
     P_{\text{trust,L}}(\gamma_t,p_t)&= \Pr\left(\frac{p_{\alpha}(a_i|t_i=1)}{p_{\alpha}(a_i|t_i=0)}>\gamma_t|t_i=1\right)\\
     &+p_t  \Pr\left(\frac{p_{\alpha}(a_i|t_i=1)}{p_{\alpha}(a_i|t_i=0)}=\gamma_t|t_i=1\right),\\
    P_{\text{trust,M}}(\gamma_t,p_t)&= \Pr\left(\frac{p_{\alpha}(a_i|t_i=1)}{p_{\alpha}(a_i|t_i=0)}>\gamma_t|t_i=0\right)\\
     &+p_t  \Pr\left(\frac{p_{\alpha}(a_i|t_i=1)}{p_{\alpha}(a_i|t_i=0)}=\gamma_t|t_i=0\right).
     \label{eq:trust_prob_two_stage}
\end{aligned}
\end{equation}

The error probability $P_{\text{e}}$ at the FC is affected by the trustworthiness classification. That is, if a legitimate robot~$i$ is classified as malicious the FC discards its measurement $Y_i$, which increases the error probability since fewer measurements are used in the FC decision making. On the other hand, if a malicious robot is classified as legitimate it can increase the error probability by sending falsified measurements to the FC. For that reason, we look to optimize the trustworthiness classification to balance these two conflicting scenarios. Determining the best $\gamma_t$ and $p_t$ to minimize the overall error probability of the hypothesis detection by the FC is the main focus of this section.

\paragraph{Detecting the Event $\Xi$} 
To determine a hypothesis $\mathcal{H}$ on the event $\Xi$, the FC only considers the measurements it receives from robots that it classifies as legitimate in the first stage, i.e., $i: \hat{t}_i=1$. Equivalently, the FC discards all the received measurements of robots it classifies as malicious. 
Then, the FC uses the following decision rule:
\begin{flalign}\label{eq:detection_p_e_FC_legitimate_assumption}
\frac{\prod_{\{i:\hat{t}_i=1\}}P_{\text{MD},\text{L}}^{1-y_i}(1-P_{\text{MD},\text{L}})^{y_i}}{\prod_{\{i:\hat{t}_i=1\}}(1-P_{\text{FA},\text{L}})^{1-y_i}P_{\text{FA},\text{L}}^{y_i}}\underset{\mathcal{H}_{0}}{\overset{\mathcal{H}_{1}}{\gtrless}}\frac{\Pr(\Xi=0)}{\Pr(\Xi=1)} = \exp(\gamma_{\text{TS}}),
\end{flalign}
where $\exp(\gamma_{\text{TS}})$ is the exponential function with respect to $\gamma_{\text{TS}}$, and it is a constant decision threshold. We set $\frac{\Pr(\Xi=0)}{\Pr(\Xi=1)} = \exp(\gamma_{\text{TS}})$ so that when we take the logarithm in later expressions we can express the resultant decision threshold as $\gamma_{\text{TS}}$ for ease of exposition. This decision rule is commonly used in standard binary hypothesis testing problems where no malicious robots are present, and will be referred to as the \emph{standard binary hypothesis decision rule}. The standard binary hypothesis decision rule is optimal in a system with no malicious robots, i.e., ${\cal M}=\emptyset$, and thus we attempt to approximate the standard binary hypothesis decision rule by first removing information from all robots deemed to be malicious. However, since there may be detection errors in the first stage which classifies legitimate and malicious robots, the threshold $\gamma_t$ and tie-break probability $p_t$ should  balance the need to exclude malicious robots from participating in the test \eqref{eq:detection_p_e_FC_legitimate_assumption} with the need to allow legitimate robots to participate in the test \eqref{eq:detection_p_e_FC_legitimate_assumption} and contribute their truthful measurements to decrease the probability of error resulting from \eqref{eq:detection_p_e_FC_legitimate_assumption}. In what follows we show how to optimize the threshold $\gamma_t$ and tie-break probability $p_t$ by first computing the probability of error of the FC using the Two Stage Approach.

Recalling the Neyman-Pearson Lemma \cite{kay_2008}, we have that \eqref{eq:classification_decision}
minimizes the missed detection probability for a desired false alarm probability of misclassifying robots. This false alarm probability dictates the value of the threshold $\gamma_t$. After the FC discards robot measurements that it does not trust, the decision rule \eqref{eq:detection_p_e_FC_legitimate_assumption} leads to the following false alarm and missed detection error probabilities,
 \begin{equation}
 \begin{aligned}
    &P_{\text{FA}}(\gamma_t,p_t,\mathbf{t},P_{\text{FA,M}}) \\
    & =\Pr\Big( \sum_{i=1}^N \hat{t}_i [w_{1,\text{L}} y_i - w_{0,\text{L}}(1-y_i)] \geq \gamma_{\text{TS}} \\
     &\hspace{4.5cm}| \mathcal{H}_0, \gamma_t,p_t,\mathbf{t},P_{\text{FA,M}} \Big), \\ 
    &P_{\text{MD}}(\gamma_t,p_t,\mathbf{t},P_{\text{MD,M}}) \\
    &= \Pr\Big( \sum_{i=1}^N \hat{t}_i [w_{1,\text{L}} y_i - w_{0,\text{L}}(1-y_i)] < \gamma_{\text{TS}} \\
    &\hspace{4.5cm}| \mathcal{H}_1 ,\gamma_t,p_t,\mathbf{t},P_{\text{MD,M}}\Big),
        \label{eq:fa_md_total}
    \end{aligned}
    \end{equation}
where
\begin{equation}
    w_{1,\text{L}} = \log\left( \frac{1 - P_{\text{MD},\text{L}}}{P_{\text{FA},\text{L}}} \right), \quad w_{0,\text{L}} = \log\left( \frac{1 - P_{\text{FA},\text{L}}}{P_{\text{MD},\text{L}}} \right).
    \label{eq:w1_w0}
\end{equation}
Consequently, the overall error probability at the FC is:
\begin{equation}
\begin{aligned}
    &P_{\text{e}}(\gamma_t,p_t,\mathbf{t},P_{\text{FA,M}},P_{\text{MD,M}})  \\ 
    &= \Pr(\Xi=0) P_{\text{FA}}(\gamma_t,p_t,\mathbf{t},P_{\text{FA,M}}) \\ 
    &+ \Pr(\Xi=1) P_{\text{MD}}(\gamma_t,p_t,\mathbf{t},P_{\text{MD,M})}.
    \label{eq:error_prob_2stage}
\end{aligned}
\end{equation}
   
We seek to minimize the probability of error \eqref{eq:error_prob_2stage} for the decision rule \eqref{eq:detection_p_e_FC_legitimate_assumption} by minimizing the false alarm and missed detection probabilities. Any sequence of $0$'s and $1$'s can occur for the  detected trust vector $\mathbf{\hat{t}}$, each yielding a different error probability, so the error probability must be calculated for each possible vector $\mathbf{\hat{t}}$, along with each possible vector $\mathbf{y}$. Unfortunately, this computation scales exponentially with the number of robots, $N$. Furthermore, the true trust vector $\mathbf{t}$ and the probabilities of false alarm and missed detection of the malicious robots are unknown, i.e., $P_{\text{FA,M}}$ and $P_{\text{MD,M}}$, therefore, they cannot be used in minimizing  \eqref{eq:error_prob_2stage}.

To this end, we derive analytical guarantees regarding the error probability of the overall detection performance of the two-stage approach as follows. We minimize the worst-case probability of error of the FC over all the possible trust vectors $\mathbf{t}\in\{0,1\}^N$ and false alarm and missed detection probabilities $P_{\text{FA,M}}$ and $P_{\text{MD,M}}$, respectively, in the interval $[0,1]$. Then, we minimize this worst-case error probability by choosing the best threshold $\gamma_t$, i.e., choose $\gamma_t = \gamma_t^*$ and tie-break probability $p_t = p_t^*$ where
\begin{equation}
    (\gamma_t^*,p_t^*) = \argmin_{\gamma_t,p_t} \max_{\mathbf{t},P_{\text{FA,M}},P_{\text{MD,M}}} P_{\text{e}}(\gamma_t,p_t,\mathbf{t},P_{\text{FA,M}},P_{\text{MD,M}}).
    \label{eq:opt_wors_case_two_stage}
\end{equation}

To this end, we must first determine the $P_{\text{FA,M}}, P_{\text{MD,M}}, \mathbf{t}$ that maximize $P_{\text{e}}$. In the remainder of this section, we assume that the proportion of malicious robots to expect in the network, denoted by $m$, is known, or we choose an upper bound for it $(\bar{m})$.

\begin{lemma} \label{lem:P_FA_M}
If $P_{\text{\emph{FA,L}}} < 0.5$ and $P_{\text{\emph{MD,L}}} < 0.5$, then the probability of false alarm and missed detection of the FC \eqref{eq:fa_md_total} is maximized for the two stage approach when malicious robots choose $P_{\text{FA,M}} = P_{\text{MD,M}} = 1$, for any vector $\boldsymbol{t}\in\{0,1\}^N$.
\end{lemma}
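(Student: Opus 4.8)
The plan is to exploit the two-stage structure. Since the trust classification $\hat{t}_i$ in the first stage depends only on the trust value $\alpha_i$, and by \cref{assumption:trust_val} the trust values are independent of the measurements $\boldsymbol{Y}$ and of the hypothesis, the classification vector $\hat{\boldsymbol{t}}$ is independent of $\boldsymbol{Y}$ given the true trust vector $\boldsymbol{t}$; crucially, its law does not depend on $P_{\text{FA,M}}$ or $P_{\text{MD,M}}$. I would therefore condition on a fixed realization of $\hat{\boldsymbol{t}}$ and show that $P_{\text{FA}}$ in \eqref{eq:fa_md_total} is non-decreasing in $P_{\text{FA,M}}$ and that $P_{\text{MD}}$ is non-decreasing in $P_{\text{MD,M}}$; the unconditional claim then follows by averaging over $\hat{\boldsymbol{t}}$, whose distribution is unaffected by the malicious strategy.

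For the false alarm probability I would first note that the hypotheses $P_{\text{FA,L}}<0.5$ and $P_{\text{MD,L}}<0.5$ make both weights in \eqref{eq:w1_w0} strictly positive, $w_{1,\text{L}}>0$ and $w_{0,\text{L}}>0$. Fixing $\hat{\boldsymbol{t}}$, let $S=\sum_{i:\hat{t}_i=1}[w_{1,\text{L}}y_i-w_{0,\text{L}}(1-y_i)]$ be the decision statistic, so a trusted robot contributes $+w_{1,\text{L}}$ when it reports $1$ and $-w_{0,\text{L}}$ when it reports $0$. Under $\mathcal{H}_0$ the measurements are independent across robots, and only trusted malicious robots (those with $t_i=0$, $\hat{t}_i=1$) report $1$ with probability governed by $P_{\text{FA,M}}$. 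Isolating one such robot $j$ and writing $S=S_{-j}+X_j$ with $S_{-j}$ independent of $X_j$, I would use
\begin{align*}
\Pr(S\geq\gamma_{\text{TS}})&=P_{\text{FA,M}}\,\Pr(S_{-j}+w_{1,\text{L}}\geq\gamma_{\text{TS}})\\
&\quad+(1-P_{\text{FA,M}})\,\Pr(S_{-j}-w_{0,\text{L}}\geq\gamma_{\text{TS}}).
\end{align*}
Since $w_{1,\text{L}}>-w_{0,\text{L}}$, the first probability dominates the second, so the partial derivative of $\Pr(S\geq\gamma_{\text{TS}})$ with respect to robot $j$'s reporting probability is nonnegative. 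Because all malicious robots share the common parameter $P_{\text{FA,M}}$ and report independently, $\Pr(S\geq\gamma_{\text{TS}})$ is a multilinear polynomial in their individual Bernoulli parameters, and summing the nonnegative partials via the chain rule shows it is non-decreasing in $P_{\text{FA,M}}$; hence $P_{\text{FA}}$ is maximized at $P_{\text{FA,M}}=1$.

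The argument for missed detection is symmetric: under $\mathcal{H}_1$ the FC errs when $S<\gamma_{\text{TS}}$, a trusted malicious robot now contributes $-w_{0,\text{L}}$ with probability $P_{\text{MD,M}}$ and $+w_{1,\text{L}}$ with probability $1-P_{\text{MD,M}}$, and the same single-robot decomposition shows $\Pr(S<\gamma_{\text{TS}})$ is non-decreasing in $P_{\text{MD,M}}$, so $P_{\text{MD}}$ is maximized at $P_{\text{MD,M}}=1$. Since every step is carried out conditionally on the fixed true trust vector $\boldsymbol{t}$, the conclusion holds for any $\boldsymbol{t}\in\{0,1\}^N$, as claimed. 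I expect the main obstacle to be this monotonicity step, specifically justifying coordinate-wise monotonicity when all malicious robots are tied to a single shared parameter; the multilinearity-plus-independence observation is what lets the per-robot stochastic-dominance inequality aggregate correctly, and care is needed to verify that the tie-break randomization $p_t$ and the conditioning on $\hat{\boldsymbol{t}}$ introduce no hidden dependence on the malicious strategy.
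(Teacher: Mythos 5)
Your proposal is correct and follows the same route as the paper's proof in Appendix A: both arguments hinge on $P_{\text{FA,L}}<0.5$ and $P_{\text{MD,L}}<0.5$ forcing $w_{1,\text{L}}>0$ and $w_{0,\text{L}}>0$, so that a trusted malicious robot reporting the wrong bit can only push the statistic $S$ toward the error region, whence the error probabilities in \eqref{eq:fa_md_total} are monotone in $P_{\text{FA,M}}$ (resp.\ $P_{\text{MD,M}}$) and are maximized at $1$. Your write-up is in fact more careful than the paper's: the decomposition $S=S_{-j}+X_j$ together with the inclusion $\{S_{-j}-w_{0,\text{L}}\geq\gamma_{\text{TS}}\}\subseteq\{S_{-j}+w_{1,\text{L}}\geq\gamma_{\text{TS}}\}$, the multilinear aggregation over the shared parameter, and the observation that the law of $\hat{\boldsymbol{t}}$ is unaffected by the malicious strategy rigorously justify the step the paper states only informally (maximizing each summand pointwise and inferring that the tail probability is maximized).
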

The proof of Lemma \ref{lem:P_FA_M} can be found in Appendix \ref{sec:proof_P_FA_M}. 

\begin{lemma} \label{lem:t_max}
Let $\bar{\mathbf{t}}$ be the worst-case vector $\mathbf{t}$, i.e., the vector $\mathbf{t}$ that maximizes the probability of error \eqref{eq:error_prob_2stage}. If $P_{\text{\emph{FA,L}}} < 0.5$, $P_{\text{\emph{MD,L}}} < 0.5$, and $P_{\text{\emph{FA,M}}} = P_{\text{\emph{MD,M}}} = 1$, then the probability of error $P_{\text{e}}(\gamma_t,p_t,\bar{\mathbf{t}},1,1)$ is maximized when $\bar{\mathbf{t}}$ contains the maximum number of malicious robots, i.e., $\sum_{i\in\mathcal{N}}\bar{t}_i=\bar{m}N$.
\end{lemma}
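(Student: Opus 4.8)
The plan is to collapse the combinatorial maximization over $\mathbf{t}$ to a one-dimensional monotonicity statement and then prove that statement by a single-robot exchange argument. First I would invoke homogeneity: by \cref{assumption:trust_val} and the i.i.d.\ sensing model, every legitimate robot contributes an i.i.d.\ term $\hat t_i[w_{1,\text{L}}y_i-w_{0,\text{L}}(1-y_i)]$ to the statistic in \eqref{eq:fa_md_total}, every malicious robot contributes an i.i.d.\ term of the same form, and all terms are mutually independent. Hence the law of the test statistic, and therefore $P_{\text{FA}}$, $P_{\text{MD}}$ and $P_{\text{e}}$, depends on $\mathbf{t}$ only through the number of malicious robots $k:=N-\sum_i t_i$. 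Writing $P_{\text{e}}(k):=P_{\text{e}}(\gamma_t,p_t,\mathbf{t},1,1)$, the lemma reduces to showing that $P_{\text{e}}(k)$ is non-decreasing in $k$, so that its maximum over all admissible configurations is attained at the largest admissible malicious count (the boundary $k=\bar mN$).

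By induction it suffices to compare $k$ and $k+1$, i.e.\ to flip one robot $j$ from legitimate to malicious while holding the other $N-1$ robots fixed. Decomposing $S=S_{-j}+X_j$ with $S_{-j}$ independent of robot $j$ and conditioning on the remainder, the increment splits as $P_{\text{e}}(k{+}1)-P_{\text{e}}(k)=\Pr(\Xi{=}0)\,\Delta P_{\text{FA}}+\Pr(\Xi{=}1)\,\Delta P_{\text{MD}}$. The clean half of the argument compares a malicious robot to a \emph{silent} one (a robot that is never trusted, contributing $X_j\equiv 0$): by \cref{lem:P_FA_M} the worst-case malicious robot, when trusted, contributes the extreme admissible value $+w_{1,\text{L}}$ under $\mathcal{H}_0$ and $-w_{0,\text{L}}$ under $\mathcal{H}_1$ — extreme precisely because $w_{1,\text{L}},w_{0,\text{L}}>0$, which holds since $P_{\text{FA,L}},P_{\text{MD,L}}<0.5$. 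Since $x\mapsto\Pr(S_{-j}\ge\gamma_{\text{TS}}-x)$ is non-decreasing and its $\mathcal{H}_1$ analogue $x\mapsto\Pr(S_{-j}<\gamma_{\text{TS}}-x)$ is non-increasing, a trusted worst-case malicious robot can only raise both $P_{\text{FA}}$ and $P_{\text{MD}}$ relative to a silent one; hence a malicious robot yields error at least that of a silent robot, for any law of $S_{-j}$.

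The main obstacle is the other half: one would like ``legitimate yields error at most that of silent'' (a trusted legitimate robot is informative and should lower the error), but this fails to hold uniformly once the remainder $S_{-j}$ already contains malicious robots, so the two comparisons cannot simply be sandwiched. Indeed $\Delta P_{\text{FA}}$ and $\Delta P_{\text{MD}}$ are individually sign-indefinite; only their prior-weighted sum is non-negative, so the proof must track $\Delta P_{\text{e}}$ as a whole. The handle I would use is that a legitimate robot's contribution is its \emph{exact} log-likelihood-ratio increment, so the $\mathcal{H}_0$- and $\mathcal{H}_1$-laws of the legitimate part of $S_{-j}$ are related by the exponential tilt $dP_{\mathcal{H}_1}/dP_{\mathcal{H}_0}=e^{s}$. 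Combining this tilt with $\gamma_{\text{TS}}=\log(\Pr(\Xi{=}0)/\Pr(\Xi{=}1))$ and the identities $e^{w_{1,\text{L}}}=(1-P_{\text{MD,L}})/P_{\text{FA,L}}$ and $e^{-w_{0,\text{L}}}=P_{\text{MD,L}}/(1-P_{\text{FA,L}})$ makes the below-threshold and above-threshold mass terms cancel pairwise, reducing the required inequality to the trivial $P_{\text{trust,M}}\ge 0$ whenever $S_{-j}$ has no malicious part. The genuinely delicate point — and where I expect the real work to lie — is that already-malicious robots inside $S_{-j}$ inject hypothesis-dependent deterministic shifts ($+w_{1,\text{L}}$ under $\mathcal{H}_0$ versus $-w_{0,\text{L}}$ under $\mathcal{H}_1$ per trusted malicious robot) that perturb this tilt; I would condition on the number of trusted malicious robots in $S_{-j}$ and verify that the cancellation, and hence $\Delta P_{\text{e}}\ge 0$, survives these shifts. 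Monotonicity of $P_{\text{e}}(k)$, and thus the lemma, then follow.
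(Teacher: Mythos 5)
Your overall architecture --- reduce to a single-robot exchange by exchangeability, then compare a legitimate robot, a silent robot, and a worst-case malicious robot --- is sound, and your first comparison is complete and correct: since $w_{1,\text{L}},w_{0,\text{L}}>0$, a trusted worst-case malicious robot contributes the extremal values $+w_{1,\text{L}}$ under $\mathcal{H}_0$ and $-w_{0,\text{L}}$ under $\mathcal{H}_1$, so it dominates a silent robot in both $P_{\text{FA}}$ and $P_{\text{MD}}$ simultaneously. The genuine gap is that the proposal stops at its load-bearing second half: that a trusted legitimate robot never increases $P_{\text{e}}$ relative to a silent one when the remainder $S_{-j}$ already contains trusted malicious robots. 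You correctly diagnose why this is delicate --- the $k$ trusted malicious robots shift the effective thresholds asymmetrically to $\gamma_{\text{TS}}-kw_{1,\text{L}}$ under $\mathcal{H}_0$ and $\gamma_{\text{TS}}+kw_{0,\text{L}}$ under $\mathcal{H}_1$, so the ``more data cannot hurt the Bayes risk'' argument no longer applies verbatim --- but you then only announce that you would ``verify that the cancellation survives these shifts.'' That verification is the actual content of the lemma under your decomposition and it is not routine: writing $\pi_0=\Pr(\Xi{=}0)$, $\pi_1=\Pr(\Xi{=}1)$ and letting $p_0,p_1$ be the conditional pmfs of the legitimate-only log-likelihood statistic, the conditional error equals the Bayes risk of that statistic plus $\sum_{s}\max\bigl(\pi_0 p_0(s),\pi_1 p_1(s)\bigr)$ over the window $[\gamma_{\text{TS}}-kw_{1,\text{L}},\,\gamma_{\text{TS}}+kw_{0,\text{L}})$; the Bayes-risk term is monotone under adjoining an independent legitimate measurement, but the window term is not obviously so. As written, the proposal is a plan whose hardest step remains open.

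For comparison, the paper's proof takes a much cruder, three-sentence route: it invokes \cref{lem:P_FA_M} to say that a trusted worst-case malicious robot reports the wrong bit with probability one while a legitimate robot does so with probability below one half, and concludes that turning a legitimate robot malicious increases both $P_{\text{FA}}$ and $P_{\text{MD}}$, hence $P_{\text{e}}$. This is essentially your ``malicious dominates'' half applied directly against ``legitimate,'' and it implicitly holds the trust decision $\hat t_j$ fixed across the swap even though changing $t_j$ changes the distribution of $\alpha_j$ and hence of $\hat t_j$ (for an LRT threshold one has $P_{\text{trust,M}}\le P_{\text{trust,L}}$, so the malicious robot is partially silenced). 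Your own setup shows why this matters: with informative trust values $P_{\text{FA}}$ can strictly decrease when a robot turns malicious, so only the prior-weighted sum can be salvaged, and only because $\gamma_{\text{TS}}$ is the Bayes threshold. In short, you have located a real difficulty that the paper's argument elides, but your proposal does not close it either; a complete proof requires carrying out the threshold-shift bookkeeping you defer, or a coupling that treats the trust and measurement channels jointly.
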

\begin{proof}
By Lemma \ref{lem:P_FA_M} the probability of false alarm and missed detection \eqref{eq:fa_md_total} are maximized when a robot is trusted and its measurement reports the wrong hypothesis ($Y_i = 1 | \mathcal{H}_0$ or $Y_i = 0 | \mathcal{H}_1$). Since the optimal policy for malicious robots is to report the wrong hypothesis with probability $1$ (Lemma \ref{lem:P_FA_M}), any robot increases the false alarm and missed detection probability of the FC when it is malicious instead of legitimate. Thus, the probability of error $P_{\text{e}}(\gamma_t,p_t,\mathbf{t},1,1)$ is maximized when the proportion of malicious robots, $m$, is maximized, i.e., when $\bar{\mathbf{t}}$ has $\bar{m}N$ malicious robots, where $\bar{m}$ is the upper bound on the proportion of malicious robots in the network.
\end{proof}

Utilizing Lemma \ref{lem:t_max}, we calculate the exact probability of error for the FC for the worst-case attack where $\mathbf{t} = \bar{\mathbf{t}}$ and $P_{\text{FA,M}} = P_{\text{MD,M}} = 1$. In order to compute the probability of error exactly, we must compute the probability of false alarm and missed detection \eqref{eq:fa_md_total}.
Let $k_{\text{L}} \in K_{\text{L}}$ be the number of legitimate robots trusted by the FC, where $K_{\text{L}} = \{ 0, \dots, (1-\bar{m})N \}$. Similarly, let $k_{\text{M}} \in K_{\text{M}}$ be the number of malicious robots trusted by the FC, where $K_{\text{M}} = \{ 0, \dots, \bar{m}N \}$. Let $S_N$ represent the left side of the inequalities in \eqref{eq:fa_md_total} given by:
\[S_\text{N} = \sum_{i=1}^N \hat{t}_i [w_{1,\text{L}} y_i - w_{0,\text{L}}(1-y_i)].\] 
Using the law of total probability, the false alarm  probability at the FC is given by
\begin{equation}
\begin{aligned}
    \resizebox{0.26\hsize}{!}{$P_{\text{FA}}(\gamma_t,p_t,\bar{\mathbf{t}},1)$} &= \resizebox{0.68\hsize}{!}{$\sum_{k_{\text{L}} \in K_{\text{L}}, k_{\text{M}} \in K_{\text{M}}} \Pr(K_{\text{L}} = k_{\text{L}}) \Pr(K_{\text{M}} = k_{\text{M}})$} \\ & \qquad \qquad \quad \quad \quad \cdot \resizebox{0.42\hsize}{!}{$P_{\text{FA}}(S_\text{N} \geq \gamma_{\text{TS}} | \mathcal{H}_0, k_{\text{L}},k_{\text{M}})$}.
\end{aligned}
\end{equation}
Similarly, the probability of missed detection of the FC is given by 
\begin{equation}
\begin{aligned}
    \resizebox{0.26\hsize}{!}{$P_{\text{MD}}(\gamma_t,p_t,\bar{\mathbf{t}},1)$} &= \resizebox{0.68\hsize}{!}{$\sum_{k_{\text{L}} \in K_{\text{L}}, k_{\text{M}} \in K_{\text{M}}} \Pr(K_{\text{L}} = k_{\text{L}}) \Pr(K_{\text{M}} = k_{\text{M}})$} \\ & \qquad \qquad \quad \quad \quad \cdot \resizebox{0.42\hsize}{!}{$P_{\text{MD}}(S_\text{N} < \gamma_{\text{TS}} | \mathcal{H}_1, k_{\text{L}},k_{\text{M}})$}.
\end{aligned}
\end{equation}
The probability of false alarm for a particular instantiation of $k_{\text{L}}$ and $k_{\text{M}}$ can be written as a function of the Binomial Cumulative Distribution Function:
\begin{equation}
    \begin{aligned}
        &P_{\text{FA}}(S_\text{N} \geq \gamma_{\text{TS}} | \mathcal{H}_0, k_{\text{L}},k_{\text{M}}) \\ &= \resizebox{0.87\hsize}{!}{$\Pr\left( \sum_{i: \{\hat{t}_i=1,t_i=1\}} y_i \geq \frac{ \gamma_{\text{TS}} - k_{\text{M}}w_{1,\text{L}} + k_{\text{L}}w_{0,\text{L}} }{w_{0,\text{L}} + w_{1,\text{L}}} | \mathcal{H}_0,k_{\text{L}},k_{\text{M}}, \right)$}, \\ &= 1 - F_{\text{b}}\left( \lceil \frac{ \gamma_{\text{TS}} - k_{\text{M}}w_{1,\text{L}} + k_{\text{L}}w_{0,\text{L}} }{w_{0,\text{L}} + w_{1,\text{L}}} \rceil ; P_{\text{FA,L}},k_{\text{L}}\right),
    \end{aligned}
    \label{eq:P_FA_kl_km}
\end{equation}
where $F_{\text{b}}(x;p,n) = \sum_{i=0}^x \binom{n}{i} p^i (1-p)^{n-i}$ is the Binomial Cumulative Distribution Function evaluated at $x$ for $n$ variables and success probability $p$. Similarly, for the probability of missed detection we have that
\begin{equation}
\begin{aligned}
    &P_{\text{MD}}(S_\text{N} < \gamma_{\text{TS}} | \mathcal{H}_1, k_{\text{L}},k_{\text{M}})= \\ &   F_{\text{b}}\left(\lceil \frac{ \gamma_{\text{TS}} + k_{\text{M}}w_{1,\text{L}} + k_{\text{L}}w_{0,\text{L}} }{w_{0,\text{L}} + w_{1,\text{L}}} \rceil - 1 ; 1-P_{\text{MD,L}},k_{\text{L}}\right).
\end{aligned}
\end{equation}

Recall \eqref{eq:trust_prob_two_stage}. We note that these probabilities depend on the distribution of the robot's vector of trust values $\boldsymbol{a}$. Then, we have that
\begin{equation}
    \begin{aligned}
        \Pr(K_{\text{L}} = k_{\text{L}}) &= \Pr\left( \sum_{i \in \mathcal{L}} \hat{t}_i = k_{\text{L}} \right) \\ &= f_{\text{b}}(k_{\text{L}} ; P_{\text{trust,L}}(\gamma_t,p_t), (1-\bar{m})N), \\ \Pr(K_{\text{M}} = k_{\text{M}}) &= \Pr\left( \sum_{i \in \bar{\mathcal{M}}} \hat{t}_i = k_{\text{M}} \right) \\ &= f_{\text{b}}(k_{\text{M}} ; P_{\text{trust,M}}(\gamma_t,p_t), \bar{m}N),
    \end{aligned}
\end{equation}
where $f_{\text{b}}(x ; p,n) = \binom{n}{x} p^x (1-p)^{n-x}$ is the Binomial probability distribution function evaluated at $x$ for $n$ variables and success probability $p$. Thus, the probability of false alarm and missed detection are
\begin{equation}
    \begin{aligned}
        &P_{\text{FA}}(\gamma_t,p_t,\bar{\mathbf{t}},1) \\
        &= \sum_{k_{\text{L}} \in K_{\text{L}}, k_{\text{M}} \in K_{\text{M}}}  f_{\text{b}}(k_{\text{L}} ; P_{\text{trust,L}}(\gamma_t,p_t), (1-\bar{m})N) \cdot \\ 
        &\hspace{2cm} f_{\text{b}}(k_{\text{M}} ; P_{\text{trust,M}}(\gamma_t,p_t), \bar{m}N) \cdot \\ 
        &\hspace{2cm} P_{\text{FA}}(S_\text{N} \geq \gamma_{\text{TS}} | \mathcal{H}_0, k_{\text{L}},k_{\text{M}}), \\ 
        &P_{\text{MD}}(\gamma_t,p_t,\bar{\mathbf{t}},1) \\
        &= \sum_{k_{\text{L}} \in K_{\text{L}}, k_{\text{M}} \in K_{\text{M}}}  f_{\text{b}}(k_{\text{L}} ; P_{\text{trust,L}}(\gamma_t,p_t), (1-\bar{m})N) \cdot \\ 
        &\hspace{2cm} f_{\text{b}}(k_{\text{M}} ; P_{\text{trust,M}}(\gamma_t,p_t), \bar{m}N) \cdot \\ 
        &\hspace{2cm} P_{\text{MD}}(S_\text{N} < \gamma_{\text{TS}} | \mathcal{H}_1, k_{\text{L}},k_{\text{M}}).
    \end{aligned}
    \label{eq:P_FA_MD_WC}
\end{equation}

Therefore, we have the total error probability
\begin{equation}
\begin{aligned}
    P_{\text{e}}(\gamma_t,p_t,\bar{\mathbf{t}},1,1) = &\Pr(\Xi=0) P_{\text{FA}}(\gamma_t,p_t,\bar{\mathbf{t}},1) + \\ & \Pr(\Xi=1) P_{\text{MD}}(\gamma_t,p_t,\bar{\mathbf{t}},1),
\end{aligned}
\label{eq:P_e_2SA_alg}
\end{equation}
and we can choose the thresholds $\gamma_t$ and $p_t$ that minimize the expression. Once we have chosen the thresholds $\gamma_t$ and $p_t$, the rest of the two stage approach becomes a standard binary hypothesis testing problem.

\begin{lemma}\label{lemma:sufficient_discrete_gamma}
Denote \[\Gamma_t: = \left\{ \frac{p_{\alpha}(a|t_i=1)}{p_{\alpha}(a|t_i=0)} \right\}_{a\in\mathcal{A}}.\] 
Then, the minimal value of \eqref{eq:opt_wors_case_two_stage} with respect to $\gamma_t$ can be achieved by $\gamma_t\in\Gamma_t$.
\end{lemma}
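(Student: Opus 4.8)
The plan is to exploit the fact that the worst-case error probability in \eqref{eq:opt_wors_case_two_stage} depends on the threshold $\gamma_t$ and tie-break probability $p_t$ \emph{only} through the pair of trust probabilities $\bigl(P_{\text{trust,L}}(\gamma_t,p_t),\,P_{\text{trust,M}}(\gamma_t,p_t)\bigr)$. Indeed, by \cref{lem:P_FA_M} and \cref{lem:t_max} the inner maximization is attained at $\mathbf{t}=\bar{\mathbf{t}}$ with $P_{\text{FA,M}}=P_{\text{MD,M}}=1$, and for that fixed worst-case attack the false-alarm and missed-detection probabilities \eqref{eq:P_FA_MD_WC}, and hence the total error \eqref{eq:P_e_2SA_alg}, are written entirely as functions of $P_{\text{trust,L}}$, $P_{\text{trust,M}}$ and the fixed constants $N$, $\bar{m}$, $\gamma_{\text{TS}}$, $\Pr(\Xi=0)$. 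Thus it suffices to show that the set of attainable pairs $\bigl(P_{\text{trust,L}},P_{\text{trust,M}}\bigr)$ is unchanged when $\gamma_t$ is restricted to the finite set $\Gamma_t$; the minimization then attains the same value over $\gamma_t\in\Gamma_t$ as over $\gamma_t\in\mathbb{R}$.

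To this end, write $\ell(a):=p_{\alpha}(a\mid t=1)/p_{\alpha}(a\mid t=0)$, so that $\Gamma_t=\{\ell(a):a\in\mathcal{A}\}$ is exactly the range of $\ell$, which is finite by \cref{assumption:trust_val}. From \eqref{eq:trust_prob_two_stage}, each of the two trust probabilities is the $p_{\alpha}(\cdot\mid t)$-measure of the trusted set induced by $(\gamma_t,p_t)$: a sum of $p_{\alpha}(a\mid t)$ over $\{a:\ell(a)>\gamma_t\}$ plus a $p_t$-weighted contribution from the tie set $\{a:\ell(a)=\gamma_t\}$. The key observation is that when $\gamma_t\notin\Gamma_t$ the tie set is empty, so $p_t$ is irrelevant and the trusted set is simply $\{a:\ell(a)>\gamma_t\}$.

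It then remains to reproduce every such partition by a threshold lying in $\Gamma_t$. Since $\Gamma_t\subseteq\mathbb{R}$, restricting $\gamma_t\in\Gamma_t$ can only shrink the attainable set, so I only need the reverse inclusion. Order $\Gamma_t$ as $\ell_1<\cdots<\ell_r$. If $\gamma_t\notin\Gamma_t$ lies strictly between consecutive values $\ell_j<\gamma_t<\ell_{j+1}$, then $\{a:\ell(a)>\gamma_t\}=\{a:\ell(a)\geq\ell_{j+1}\}$, and this identical trusted subset is obtained by taking $\gamma_t'=\ell_{j+1}\in\Gamma_t$ with tie-break $p_t'=1$ (or, equivalently, $\gamma_t'=\ell_j$ with $p_t'=0$). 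The boundary cases are analogous: $\gamma_t<\ell_1$ (trust everyone) is matched by $\gamma_t'=\ell_1$, $p_t'=1$, and $\gamma_t>\ell_r$ (trust no one) by $\gamma_t'=\ell_r$, $p_t'=0$. Hence every attainable $(P_{\text{trust,L}},P_{\text{trust,M}})$ is realized by some $\gamma_t\in\Gamma_t$, which establishes the claim.

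The conceptual crux is the first reduction: once the error probability is seen to factor through the single pair $(P_{\text{trust,L}},P_{\text{trust,M}})$, finiteness of $\Gamma_t$ makes the discretization almost immediate. The one subtlety to verify carefully is that the shared tie-break $p_t$ poses no conflict between the legitimate and malicious conditionals; this holds because the trusted subset of $\mathcal{A}$ is one common object, and matching that subset matches \emph{both} conditional sums $\sum p_{\alpha}(a\mid t)$ simultaneously.
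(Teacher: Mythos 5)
Your proof is correct and follows the same underlying idea as the paper's, which simply asserts in one sentence that the claim ``follows directly from the finiteness of the set $\mathcal{A}$ and since $p_t$ can take values in the interval $[0,1]$.'' You have essentially filled in the details the authors left implicit: the error probability factors through $\bigl(P_{\text{trust,L}},P_{\text{trust,M}}\bigr)$, and any off-grid threshold's trusted set is reproduced by an adjacent element of $\Gamma_t$ with an appropriate tie-break.
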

\begin{proof}
The proof follows directly from the finiteness of the  set $\mathcal{A}$ and since $p_t$ can take values in the interval $[0,1]$.
\end{proof}

\begin{algorithm}[h!]
\caption{Two Stage Approach \\ Input: $P_{\text{FA},\text{L}}$, $P_{\text{MD},\text{L}}$, $\hat{P}_{\text{FA},\text{M}} = \hat{P}_{\text{MD},\text{M}} = 1$, $\Pr(\Xi=0)$, $\Pr(\Xi=1)$, $\mathbf{y}$, $\boldsymbol{a}$, $\bar{\mathbf{t}}$, $\Gamma_t$, $\delta_p$ \\ Output: Decision $\mathcal{H}_0$ or $\mathcal{H}_1$}
\label{alg:2stage_discrete}
\begin{algorithmic}[1]
\State Set $\Gamma_p = \{0,\delta_p,2\delta_p,\dots,1\}$.
\State Set $\gamma_{t,\text{temp}}=0$, $p_{t,\text{temp}}=0$, $P_{\text{e,temp}}=2$.
\ForAll{$\hat{\gamma}_t \in \Gamma_t$, $\hat{p}_t \in \Gamma_p$}
\State Compute $P_{\text{trust,L}}(\hat{\gamma}_t,\hat{p}_t)$, $P_{\text{trust,M}}(\hat{\gamma}_t,\hat{p}_t)$ by \eqref{eq:trust_prob_two_stage}.
\State Compute $P_{\text{FA}}(\hat{\gamma}_t,\hat{p}_t,\bar{\mathbf{t}},1)$, $P_{\text{MD}}(\hat{\gamma}_t,\hat{p}_t,\bar{\mathbf{t}},1)$ by \eqref{eq:P_FA_MD_WC}.
\State Compute $P_{\text{e}}(\hat{\gamma}_t,\hat{p}_t,\bar{\mathbf{t}},1,1)$ by \eqref{eq:P_e_2SA_alg}. 
\If{$P_{\text{e}}(\hat{\gamma}_t,\hat{p}_t,\bar{\mathbf{t}},1,1)<P_{\text{e,temp}}$}
\State Set $(\gamma_{t,\text{temp}}, p_{t,\text{temp}})=(\hat{\gamma}_t,\hat{p}_t)$.
\State Set $P_{\text{e,temp}}=P_{\text{e}}(\hat{\gamma}_t,\hat{p}_t,\bar{\mathbf{t}},1,1)$.
\EndIf
\EndFor
\State Set $(\gamma_t, p_t) = (\gamma_{t,\text{temp}}, p_{t,\text{temp}})$.
\State Determine the vector $\mathbf{\hat{t}}$ using \eqref{eq:classification_decision}.
\State Determine decision using \eqref{eq:detection_p_e_FC_legitimate_assumption}.
\State Return decision $\mathcal{H}_0$ or $\mathcal{H}_1$.
\end{algorithmic}
\end{algorithm}

Algorithm \ref{alg:2stage_discrete} explains the two stage approach step-by-step. Algorithm \ref{alg:2stage_discrete} takes a set $\Gamma_t$ as input. Then, for each $\hat{\gamma}_t \in \Gamma_t$ and each $\hat{p}_t \in \Gamma_p$ we compute $P_{\text{trust,L}}(\hat{\gamma}_t,\hat{p}_t)$, $P_{\text{trust,M}}(\hat{\gamma}_t,\hat{p}_t)$, as well as $P_{\text{FA}}(\hat{\gamma}_t,\hat{p}_t,\bar{\mathbf{t}},1)$ and $P_{\text{MD}}(\hat{\gamma}_t,\hat{p}_t,\bar{\mathbf{t}},1)$. Then we compute the probability of error at the FC for the given $\hat{\gamma}_t$ and $\hat{p}_t$. The $\hat{\gamma}_t$ and $\hat{p}_t$ that yields the minimum probability of error is then used in the decision rule in \eqref{eq:classification_decision} to determine which robots to trust or not trust (vector $\mathbf{\hat{t}}$). Finally, we use the chosen vector $\mathbf{\hat{t}}$ to make a decision using the standard binary hypothesis decision rule \eqref{eq:detection_p_e_FC_legitimate_assumption}.

Determining the threshold value $\gamma_t$ and tie-break probability $p_t$ requires computing the probability of error $|\Gamma_t|\cdot|\Gamma_p|$ times, where $|\cdot|$ represents the cardinality of the set. However, this only needs to be computed once, and then the returned $\gamma_t$ and $p_t$ can be used to run each subsequent hypothesis test. With a given $\gamma_t$ and $p_t$, the hypothesis test requires $\mathcal{O}(N)$ comparisons.

\begin{theorem}\label{thm:2SA_optimum_decision_rule}
Assume that the FC uses the decision rule in \eqref{eq:classification_decision} to detect malicious robots, and then uses the decision rule \eqref{eq:detection_p_e_FC_legitimate_assumption}. Then Algorithm \ref{alg:2stage_discrete} chooses the threshold value $\gamma_t$ and tie-break probability $p_t$ that minimize the worst-case probability of error of the FC up to a  discretization distance \[d(\delta_p):=\min_{p_t\in\Gamma_p}P_{\text{e}}(\gamma_t^*,p_t,\bar{\mathbf{t}},1,1)-P_{\text{e}}(\gamma_t^*,p_t^*,\bar{\mathbf{t}},1,1).\]
Furthermore, $d(\delta_p)\rightarrow 0$ as $\delta_p\rightarrow 0$.
\end{theorem}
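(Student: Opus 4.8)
The plan is to turn the min--max optimization \eqref{eq:opt_wors_case_two_stage} into a plain minimization over $(\gamma_t,p_t)$ by first collapsing the inner maximization with the preceding lemmas, then to treat the two decision parameters asymmetrically: the search over $\gamma_t$ is \emph{exact} while the search over $p_t$ is a grid discretization, and the residual suboptimality is exactly the quantity $d(\delta_p)$.

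First I would use \cref{lem:P_FA_M} and \cref{lem:t_max} to show that, for every fixed pair $(\gamma_t,p_t)$,
\[
\max_{\mathbf{t},P_{\text{FA,M}},P_{\text{MD,M}}} P_{\text{e}}(\gamma_t,p_t,\mathbf{t},P_{\text{FA,M}},P_{\text{MD,M}}) = P_{\text{e}}(\gamma_t,p_t,\bar{\mathbf{t}},1,1),
\]
i.e.\ the worst-case adversary plays $P_{\text{FA,M}}=P_{\text{MD,M}}=1$ against the worst-case trust vector $\bar{\mathbf{t}}$, and since $P_{\text{e}}$ is a convex combination of $P_{\text{FA}}$ and $P_{\text{MD}}$ the two maxima coincide. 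This right-hand side is precisely the closed form \eqref{eq:P_e_2SA_alg} (built from \eqref{eq:P_FA_MD_WC}) that \cref{alg:2stage_discrete} evaluates for each candidate pair, so \eqref{eq:opt_wors_case_two_stage} reduces to $\min_{\gamma_t,p_t}P_{\text{e}}(\gamma_t,p_t,\bar{\mathbf{t}},1,1)$.

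Next I would exploit the asymmetry between the two parameters. By \cref{lemma:sufficient_discrete_gamma} the minimizing threshold can be taken in the finite set $\Gamma_t$, so $\gamma_t^*\in\Gamma_t$ and the algorithm's enumeration of $\Gamma_t$ is exact; only $p_t$ is approximated, by the uniform grid $\Gamma_p$. Writing $(\hat\gamma_t,\hat p_t)$ for the joint minimizer returned over $\Gamma_t\times\Gamma_p$, feasibility of the slice $\{\gamma_t^*\}\times\Gamma_p$ together with global optimality of $(\gamma_t^*,p_t^*)$ over $\Gamma_t\times[0,1]$ yields
\[
P_{\text{e}}(\gamma_t^*,p_t^*,\bar{\mathbf{t}},1,1) \;\le\; P_{\text{e}}(\hat\gamma_t,\hat p_t,\bar{\mathbf{t}},1,1) \;\le\; \min_{p_t\in\Gamma_p}P_{\text{e}}(\gamma_t^*,p_t,\bar{\mathbf{t}},1,1).
\]
Subtracting $P_{\text{e}}(\gamma_t^*,p_t^*,\bar{\mathbf{t}},1,1)$ bounds the algorithm's suboptimality by exactly $d(\delta_p)$, proving the first claim.

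For the limit $d(\delta_p)\to 0$ I would establish that $p_t\mapsto P_{\text{e}}(\gamma_t^*,p_t,\bar{\mathbf{t}},1,1)$ is continuous on the compact interval $[0,1]$: by \eqref{eq:trust_prob_two_stage} the quantities $P_{\text{trust,L}}$ and $P_{\text{trust,M}}$ are affine in $p_t$, they enter \eqref{eq:P_FA_MD_WC} only through the binomial pmfs $f_{\text{b}}(\cdot\,;P_{\text{trust}},\cdot)$, which are polynomials, while the conditional terms $P_{\text{FA}}(S_\text{N}\ge\gamma_{\text{TS}}\mid\cdot)$ and $P_{\text{MD}}(S_\text{N}<\gamma_{\text{TS}}\mid\cdot)$ are independent of $p_t$; hence $P_{\text{e}}$ is a polynomial, and in particular continuous, in $p_t$. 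Continuity then gives, for any $\varepsilon>0$, a neighborhood of $p_t^*$ on which $P_{\text{e}}$ stays within $\varepsilon$ of its minimum, and once $\delta_p$ is small enough this neighborhood contains a grid point, forcing $d(\delta_p)<\varepsilon$. I expect the only delicate point --- the \emph{main obstacle} --- to be the careful bookkeeping of this asymmetry, namely verifying that the definition of $d(\delta_p)$ with $\gamma_t$ pinned at $\gamma_t^*$ genuinely dominates the joint grid suboptimality; the continuity argument itself is routine once the polynomial structure in $p_t$ is observed.
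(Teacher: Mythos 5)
Your proposal is correct and follows essentially the same route as the paper's proof: collapse the inner maximization via Lemmas \ref{lem:P_FA_M} and \ref{lem:t_max}, restrict $\gamma_t$ to the finite set $\Gamma_t$ via Lemma \ref{lemma:sufficient_discrete_gamma}, bound the grid suboptimality by the $\gamma_t^*$-slice quantity $d(\delta_p)$, and conclude via continuity of $P_{\text{e}}$ as a polynomial in $p_t$. Your treatment is, if anything, slightly more explicit than the paper's about why the slice bound dominates the joint grid error and why $P_{\text{e}}$ is polynomial in $p_t$.
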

\begin{proof}
The goal is to minimize the worst-case probability of error of the FC, i.e.,
\begin{equation}
    \min_{\gamma_t,p_t} \max_{\mathbf{t},P_{\text{FA,M}},P_{\text{MD,M}}} P_{\text{e}}(\gamma_t,p_t,\mathbf{t},P_{\text{FA,M}},P_{\text{MD,M}}).
\end{equation}
Let $\bar{P}_e$ be the worst-case probability of error computed using the worst-case probability of false alarm and missed detection from \eqref{eq:P_FA_MD_WC}. Furthermore, let $\bar{\mathbf{t}}$ be the worst-case vector $\mathbf{t}$. Using the results from Lemmas \ref{lem:P_FA_M}, \ref{lem:t_max} and \eqref{eq:P_FA_MD_WC} we upper bound the error probability using the worst-case error probability:
\begin{equation}
    \begin{aligned}
        \min_{\gamma_t,p_t} \max_{\mathbf{t},P_{\text{FA,M}},P_{\text{MD,M}}} P_{\text{e}}&(\gamma_t,p_t,\mathbf{t},P_{\text{FA,M}},P_{\text{MD,M}}) \\ &= \min_{\gamma_t,p_t} \max_{\mathbf{t}} P_{\text{e}}(\gamma_t,p_t,\mathbf{t},1,1), \\ 
        &= \min_{\gamma_t,p_t} \bar{P}_e(\gamma_t,p_t,\bar{\mathbf{t}},1,1).
    \end{aligned}
\end{equation}
The equality in the first line directly follows from Lemma \ref{lem:P_FA_M}. The second line follows from the first by inserting the worst-case vector $\mathbf{t}$ as the one that maximizes the probability of error $P_{\text{e}}$ (Lemma \ref{lem:t_max}).

Additionally, by Lemma \ref{lemma:sufficient_discrete_gamma}, it is  sufficient to optimize $\gamma_t$ over the set  $\Gamma_t$. 
Now, since we optimize $p_t$ using a line search, we may not necessarily find an optimal pair $(\gamma_t^*,p_t^*)$. However, we can upper bound the distance from the optimal solution for the worst case scenario by:
\begin{flalign}
    &\min_{\gamma_t\in\Gamma_t,p_t\in\Gamma_p}P_{\text{e}}(\gamma_t,p_t,\bar{\mathbf{t}},1,1)-P_{\text{e}}(\gamma_t^*,p_t^*,\bar{\mathbf{t}},1,1)\nonumber\\
    &\leq \min_{p_t\in\Gamma_p}P_{\text{e}}(\gamma_t^*,p_t,\bar{\mathbf{t}},1,1)-P_{\text{e}}(\gamma_t^*,p_t^*,\bar{\mathbf{t}},1,1)\nonumber\\
    &=d(\delta_p).
\end{flalign}
For every fixed $\gamma_t$, the function $P_{\text{e}}(\gamma_t,p_t,\bar{\mathbf{t}},1,1)$ is a polynomial function of $p_t$, therefore, it is continuous in $p_t$ (over the interval $p_t\in[0,1]$). 
Consequently, 
$d(\delta_p)\rightarrow 0$ as $\delta_p\rightarrow 0$. 
\end{proof}

\subsection{A-GLRT Algorithm} \label{sec:GLRT}
The main purpose of this section is to construct an efficient algorithm that implements the GLRT in \eqref{eq:GLRT_ML_decision_joint}. We can simplify \eqref{eq:GLRT_ML_decision_joint} by recalling that given the true trustworthiness of a robot $t_i$ and the true hypothesis $\mathcal{H}$, the trust value $\alpha_i$ and the measurement $Y_i$ are statistically independent. 
Thus,
\begin{flalign}
&\Pr(\boldsymbol{a},\boldsymbol{y}|\mathcal{H}_1,\boldsymbol{t},P_{\text{MD,M}})\nonumber\\
&=\Pr(\boldsymbol{a}|\mathcal{H}_1,\boldsymbol{t},P_{\text{MD,M}})\Pr(\boldsymbol{y}|\mathcal{H}_1,\boldsymbol{t},P_{\text{MD,M}}),\\
&\Pr(\boldsymbol{a},\boldsymbol{y}|\mathcal{H}_0,\boldsymbol{t},P_{\text{FA,M}})\nonumber\\
&=\Pr(\boldsymbol{a}|\mathcal{H}_0,\boldsymbol{t},P_{\text{FA,M}})\Pr(\boldsymbol{y}|\mathcal{H}_0,\boldsymbol{t},P_{\text{FA,M}}).
\end{flalign}
Furthermore, the trust value $\alpha_i$ is independent of the  true hypothesis $\mathcal{H}$. Thus,
\begin{flalign}
   \Pr(\boldsymbol{a}|\mathcal{H}_1,\boldsymbol{t},P_{\text{MD,M}})=\Pr(\boldsymbol{a}|\mathcal{H}_0,\boldsymbol{t},P_{\text{FA,M}})=\Pr(\boldsymbol{a}|\boldsymbol{t}). 
\end{flalign}

Hence, we obtain
\begin{flalign}\label{eq:GLRT_ML_decision_independent}
\frac{\max_{\boldsymbol{t}\in\{0,1\}^N,P_{\text{MD,M}}\in[0,1]}\Pr(\boldsymbol{a}|\boldsymbol{t})\Pr(\boldsymbol{y}|\mathcal{H}_1,\boldsymbol{t},P_{\text{MD,M}})}{\max_{\boldsymbol{t}\in\{0,1\}^N,P_{\text{FA,M}}\in[0,1]}\Pr(\boldsymbol{a}|\boldsymbol{t})\Pr(\boldsymbol{y}|\mathcal{H}_0,\boldsymbol{t},P_{\text{FA,M}})}\underset{\mathcal{H}_{0}}{\overset{\mathcal{H}_{1}}{\begin{smallmatrix}>\\\leqslant\end{smallmatrix}}} \gamma_{\text{AG}}.
\end{flalign}
We choose $\gamma_{\text{AG}}=\frac{\Pr(\Xi=0)}{\Pr(\Xi=1)}$ since we do not assume anything about the the prior distribution of $\boldsymbol{t}$. 
The challenging part of using the GLRT in this problem is calculating the maximum likelihood estimations for both numerator and denominator. The unknown $\boldsymbol{t}$ is a discrete multidimensional variable while $P_{\text{MD,M}}$ and $P_{\text{FA,M}}$ are continuous variables restricted to the domain $[0,1]$. Therefore, calculating the MLE is not trivial. The main purpose of this section is to construct an efficient algorithm that implements the GLRT. Due to the symmetry in calculation of the numerator and denominator in \eqref{eq:GLRT_ML_decision_independent}, we focus our discussion on the calculation of the numerator. 

Using Assumption \ref{assumption:trust_val} about the trust values, we obtain the following formulation of $\Pr(\boldsymbol{a}|\boldsymbol{t})$:
$$\Pr(\boldsymbol{a}|\boldsymbol{t}) = \prod_{i=1}^Np_{\alpha}(a_i|t_i).$$ Additionally, we obtain the following equations using the i.i.d assumption about measurements: 
\begin{flalign}
   & \Pr(\boldsymbol{y}|\mathcal{H}_0,\boldsymbol{t},P_{\text{FA,M}})=\prod_{i:t_i=1} P_{\text{FA,L}}^{y_i}\cdot(1-P_{\text{FA,L}})^{1-y_i} \nonumber\\
   &\hspace{3cm}\cdot \prod_{i:t_i=0} P_{\text{FA,M}}^{y_i}\cdot(1-P_{\text{FA,M}})^{1-y_i}\label{eq:assumption_y_given_h0_t},\\
   & \Pr(\boldsymbol{y}|\mathcal{H}_1,\boldsymbol{t},P_{\text{MD,M}})=\prod_{i:t_i=1} (1-P_{\text{MD,L}})^{y_i}\cdot P_{\text{MD,L}}^{1-y_i}\nonumber\\
   &\hspace{3cm}\cdot \prod_{i:t_i=0} (1-P_{\text{MD,M}})^{y_i}\cdot P_{\text{MD,M}}^{1-y_i}. \label{eq:assumption_y_given_h1_t}
\end{flalign}
Using these equations, we write the numerator as:
\begin{equation}
    \begin{aligned}
    \label{eq:GLRT_numerator_maximization}
    \max_{\boldsymbol{t}\in\{0,1\}^N,P_{\text{MD,M}}\in[0,1]}\left\{\prod_{i:t_i=1}p_{\alpha}(a_i|t_i)P_{\text{MD,L}}^{1-y_i}(1-P_{\text{MD,L}})^{y_i} \right. \cdot \\ 
    \left. \prod_{i:t_i=0}p_{\alpha}(a_i|t_i)P_{\text{MD,M}}^{1-y_i}(1-P_{\text{MD,M}})^{y_i} \right\}.
    \end{aligned}
\end{equation}

Since the optimization problem over variables $\boldsymbol{t}$ and $P_{\text{MD,M}}$ at the same time is difficult we can reformulate the problem as two nested optimizations using the Principle of Iterated Suprema \cite[p. 515]{olmsted1959real}, that is: 
$$
\begin{aligned}
    \sup\{f(z,w): z \in \mathcal{Z}, w \in \mathcal{W}\} =
    \sup_{z \in \mathcal{Z}}\{\sup_{w \in \mathcal{W}}\{f(z,w)\}\} \\
    = \sup_{w \in \mathcal{W}}\{\sup_{z \in \mathcal{Z}}\{f(z,w)\}\},
    \end{aligned}
$$
where $f \colon \mathcal{Z}\times \mathcal{W} \to \mathbb{R}$, and $\mathcal{Z},\mathcal{W}\subseteq \mathbb{R}^d$. By the Principle of Iterated Suprema we can calculate the maximization in \eqref{eq:GLRT_ML_decision} in two different ways. 
We rewrite the maximization problem in \eqref{eq:GLRT_numerator_maximization} as:
\begin{equation}
    \begin{aligned}\label{eq:GLRT_numerator_t_first}
    \max_{\boldsymbol{t}\in\{0,1\}^N}\left\{\max_{P_{\text{MD,M}}\in[0,1]}\left\{\prod_{i:t_i=1}p_{\alpha}(a_i|t_i)P_{\text{MD,L}}^{1-y_i}(1-P_{\text{MD,L}})^{y_i} \right. \right. \cdot \\
    \left. \left.\prod_{i:t_i=0}p_{\alpha}(a_i|t_i)P_{\text{MD,M}}^{1-y_i}(1-P_{\text{MD,M}})^{y_i}\right\}\right\}.
    \end{aligned}
\end{equation}

With this formulation, one possible way to calculate the maximization is iterating over all vectors  $\boldsymbol{t}$ in the set $\{0,1\}^N$; then for each $\boldsymbol{t}$, calculating the inner maximization. We show how to calculate this maximization in the following lemma.

\begin{lemma}
\label{lem:lemma_t_given}
Let $\boldsymbol{t}$ and $\boldsymbol{y}$ be given vectors in $\{0,1\}^N$. Assume that $p_{\alpha}(a_i|t_i)$ is known both $t_i=0$ and $t_i=1$, and that $\sum_{i:t_i=0}1>0$. Then,
\begin{equation}
    \begin{aligned}
    &\prod_{i:t_i=1}p_{\alpha}(a_i|t_i)P_{\text{\emph{MD,L}}}^{1-y_i}(1-P_{\text{\emph{MD,L}}})^{y_i} \cdot\\
   & \qquad\prod_{i:t_i=0}p_{\alpha}(a_i|t_i)P_{\text{\emph{MD,M}}}^{1-y_i}(1-P_{\text{\emph{MD,M}}})^{y_i}
    \end{aligned}
\label{eq:numerator_inner_max}
\end{equation}

is maximized by $\widehat{P}_{\text{\emph{MD,M}}}=\frac{\sum_{i:t_i=0}(1-y_i)}{\sum_{i:t_i=0}1}$.
Additionally, if $\sum_{i:t_i=0}1=0$, i.e., $|\{i:t_i=0\}|=0$, any choice $\widehat{P}_{\text{\emph{MD,M}}}\in[0,1]$ maximizes \eqref{eq:numerator_inner_max}.
\end{lemma}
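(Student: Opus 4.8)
The plan is to treat the expression in \eqref{eq:numerator_inner_max} as a function of the single scalar $P_{\text{MD,M}}$, with $\boldsymbol{t}$ and $\boldsymbol{y}$ held fixed, and to recognize it as a scaled Bernoulli likelihood whose maximizer is the corresponding empirical mean. First I would observe that the factor $\prod_{i:t_i=1} p_{\alpha}(a_i|t_i) P_{\text{MD,L}}^{1-y_i}(1-P_{\text{MD,L}})^{y_i}$, together with the factors $\prod_{i:t_i=0} p_{\alpha}(a_i|t_i)$, do not depend on $P_{\text{MD,M}}$ and are strictly positive (by \cref{assumption:trust_val}, the condition $p_{\alpha}(a|t=0)\cdot p_{\alpha}(a|t=1)\notin\{0,1\}$ rules out a vanishing trust likelihood). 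Hence maximizing \eqref{eq:numerator_inner_max} over $P_{\text{MD,M}}\in[0,1]$ is equivalent to maximizing
\[
g(P) := P^{s}(1-P)^{n_0 - s}, \qquad s := \sum_{i:t_i=0}(1-y_i), \quad n_0 := \sum_{i:t_i=0} 1,
\]
where I have collected the exponents arising from $\prod_{i:t_i=0} P_{\text{MD,M}}^{1-y_i}(1-P_{\text{MD,M}})^{y_i}$.

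Next I would solve this scalar maximization of $g$ on $[0,1]$. In the generic case $0<s<n_0$, the map $\log g(P)=s\log P+(n_0-s)\log(1-P)$ is well defined on $(0,1)$ and strictly concave there (a nonnegative combination of the strictly concave $\log P$ and $\log(1-P)$). Setting its derivative $\tfrac{s}{P}-\tfrac{n_0-s}{1-P}$ to zero gives the unique stationary point $P=s/n_0$, which is therefore the global maximizer. In the boundary cases $s=0$ and $s=n_0$, the function $g$ is monotone on $[0,1]$ and attains its maximum at $P=0$ and $P=1$ respectively, each of which again equals $s/n_0$. Substituting back $s=\sum_{i:t_i=0}(1-y_i)$ and $n_0=\sum_{i:t_i=0}1>0$ yields $\widehat{P}_{\text{MD,M}}=\frac{\sum_{i:t_i=0}(1-y_i)}{\sum_{i:t_i=0}1}$, as claimed.

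Finally, for the degenerate case $n_0=|\{i:t_i=0\}|=0$, the product over $\{i:t_i=0\}$ is empty and hence equals $1$, so the whole expression in \eqref{eq:numerator_inner_max} is independent of $P_{\text{MD,M}}$; every $\widehat{P}_{\text{MD,M}}\in[0,1]$ is then trivially a maximizer, which is the second assertion of the lemma.

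I do not expect a serious obstacle: this is a standard maximum-likelihood computation for a Bernoulli (binomial) likelihood. The only care needed is in (i) justifying that the omitted factors are strictly positive so that they can be dropped without changing the $\argmax$, and (ii) verifying that the closed form $s/n_0$ remains valid at the boundaries $s\in\{0,n_0\}$, where the logarithm used in the first-order argument is no longer defined and one must instead invoke monotonicity of $g$ directly.
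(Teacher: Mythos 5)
Your proposal is correct and follows essentially the same route as the paper: factor out the terms independent of $P_{\text{MD,M}}$ and reduce to the standard Bernoulli maximum-likelihood computation, whose solution is the empirical mean $\frac{\sum_{i:t_i=0}(1-y_i)}{\sum_{i:t_i=0}1}$. The only difference is cosmetic --- the paper cites a textbook result for the Bernoulli MLE, whereas you derive it explicitly via log-concavity and handle the boundary cases $s\in\{0,n_0\}$ by monotonicity, which makes your write-up more self-contained.
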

\begin{proof}
First, observe that given the vector $\boldsymbol{t}$,  \eqref{eq:numerator_inner_max} is maximized by MLE of $\prod_{i:t_i=0}p_{\alpha}(a_i|t_i)P_{\text{MD,M}}^{1-y_i}(1-P_{\text{MD,M}})^{y_i}$. 
Furthermore, since
\begin{flalign}
&\prod_{i:t_i=0}p_{\alpha}(a_i|t_i)P_{\text{MD,M}}^{1-y_i}(1-P_{\text{MD,M}})^{y_i}\nonumber\\
&=\left(\prod_{i:t_i=0}p_{\alpha}(a_i|t_i)\right)\left(\prod_{i:t_i=0}P_{\text{MD,M}}^{1-y_i}(1-P_{\text{MD,M}})^{y_i}\right),   
\end{flalign}
it follows that  \eqref{eq:numerator_inner_max} is maximized by the MLE of $\prod_{i:t_i=0}P_{\text{MD,M}}^{1-y_i}(1-P_{\text{MD,M}})^{y_i}$.
\\
This is a well-known estimation problem \cite[Problem 7.8]{kay1993estimation}, that together with the invariance property of the MLE \cite[Theorem 7.2]{kay1993estimation} leads to the optimal estimator 
\[\widehat{P}_{\text{MD,M}}=\frac{\sum_{i:t_i=0}(1-y_i)}{\sum_{i:t_i=0}1}.\]
Note, that this estimator is equal to the empirical missed detection probability of the measurements sent by the malicious robots.
Finally, it is easy to validate that if  $|\{i:t_i=0\}|=0$, any choice $\widehat{P}_{\text{MD,M}}\in[0,1]$ maximizes \eqref{eq:numerator_inner_max}.
\end{proof}

Unfortunately, since the set $\{0,1\}^N$ exponentially with the number of robots in the network, this approach is computationally intractable for large robot networks. Therefore, we look for an alternative solution. Another equivalent formulation of the maximization problem that is obtained by the Principle of Iterated Supremum is
\begin{equation}
    \begin{aligned}\label{eq:GLRT_numerator_pma_first}
    \max_{P_{\text{MD,M}}\in[0,1]}\left\{\max_{\boldsymbol{t}\in\{0,1\}^N}\left\{\prod_{i:t_i=1}p_{\alpha}(a_i|t_i)P_{\text{MD,L}}^{1-y_i}(1-P_{\text{MD,L}})^{y_i} \right. \right. \cdot \\
    \left. \left.\prod_{i:t_i=0}p_{\alpha}(a_i|t_i)P_{\text{MD,M}}^{1-y_i}(1-P_{\text{MD,M}})^{y_i}\right\}\right\},
    \end{aligned}
\end{equation}
where the order of variables that the maximization is taken over is flipped. Since the variable $P_{\text{MD,M}}$ belongs to an uncountably infinite set, it is impossible to perform the maximization with this formulation. However, assuming that we have a given $P_{\text{MD,M}}$, the inner maximization can still be calculated. The following lemma shows how to calculate the inner maximization.

\begin{lemma}
\label{lem:lemma_P_given}
Let $P_{\text{\emph{MD,M}}}$, $\boldsymbol{a}$, and $\mathbf{y}$ be given. Additionally, assume that $p_{\alpha}(a_i|t_i)$ is known for both $t_i=0$ and $t_i=1$.
Let 
$$
\begin{aligned}
    c_{\text{\emph{L}},i}=p_{\alpha}(a_i|t_i)P_{\text{\emph{MD,L}}}^{1-y_i}(1-P_{\text{\emph{MD,L}}})^{y_i}
\end{aligned}
$$ and 
$$
\begin{aligned}
    c_{\text{\emph{M}},i}= p_{\alpha}(a_i|t_i)P_{\text{\emph{MD,M}}}^{1-y_i}(1-P_{\text{\emph{MD,M}}})^{y_i}.
\end{aligned}
$$ 
If the estimated robot identity vector $\hat{\boldsymbol{t}}$ is constructed by choosing $\hat{t_i}=1$ if $c_{\text{\emph{L}},i}\geq c_{\text{\emph{M}},i}$ and $\hat{t_i}=0$ otherwise, where $\hat{t_i}$ is the $i^{th}$ component of $\hat{\boldsymbol{t}}$, then, $\hat{\boldsymbol{t}}$ is a vector that maximizes the expression \eqref{eq:numerator_inner_max}.
\end{lemma}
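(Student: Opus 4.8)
The plan is to exploit the fact that the objective in \eqref{eq:numerator_inner_max} \emph{factorizes completely across the robots}. With the definitions of $c_{\text{L},i}$ and $c_{\text{M},i}$ given in the statement, the product in \eqref{eq:numerator_inner_max} can be rewritten as $\prod_{i=1}^{N} g_i(t_i)$, where $g_i(1) = c_{\text{L},i}$ and $g_i(0) = c_{\text{M},i}$. In other words, each robot $i$ contributes a single factor to the product, and that factor is determined solely by its own bit $t_i$, independently of the choices made for the other robots. This decoupling is the crux of the lemma: it turns the seemingly exponential search over $\boldsymbol{t}\in\{0,1\}^N$ into $N$ independent binary decisions.

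First I would observe that every factor is non-negative: both $c_{\text{L},i}$ and $c_{\text{M},i}$ are products of a trust-value probability $p_{\alpha}(a_i|t_i)\in[0,1]$ with terms of the form $P^{1-y_i}(1-P)^{y_i}$ for $P\in[0,1]$, hence $c_{\text{L},i},c_{\text{M},i}\ge 0$. Next I would invoke the elementary fact that a product of independently chosen non-negative numbers is maximized termwise. Concretely, for any $\boldsymbol{t}$ we have the bound
\[
\prod_{i=1}^N g_i(t_i) \;\le\; \prod_{i=1}^N \max\{c_{\text{L},i},c_{\text{M},i}\},
\]
which follows by a one-factor-at-a-time swapping argument: because all remaining factors are non-negative, replacing any single $g_i(t_i)$ by $\max\{c_{\text{L},i},c_{\text{M},i}\}$ cannot decrease the product. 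The proposed vector $\hat{\boldsymbol{t}}$, which sets $\hat t_i = 1$ exactly when $c_{\text{L},i}\ge c_{\text{M},i}$, achieves $g_i(\hat t_i)=\max\{c_{\text{L},i},c_{\text{M},i}\}$ for every $i$ and therefore attains this upper bound, so it is a maximizer of \eqref{eq:numerator_inner_max}.

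The argument is essentially routine once the factorization is recognized; the only points that require a moment of care are the non-negativity (without it, the termwise-maximization step could fail through sign cancellation) and the degenerate cases in which some factors vanish, for example when $P_{\text{MD,M}}\in\{0,1\}$ or when a trust-value probability is zero. In those cases the product may be zero for every choice of $\boldsymbol{t}$, but the swapping bound above still holds with non-negative factors, so the stated rule continues to return a maximizer; ties, where $c_{\text{L},i}=c_{\text{M},i}$, are broken toward $\hat t_i=1$ and do not affect the value of the product. This is the main obstacle, and it is a mild one.
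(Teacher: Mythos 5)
Your proof is correct and follows essentially the same route as the paper's: rewrite \eqref{eq:numerator_inner_max} as a product over all robots of per-robot factors $c_{\text{L},i}^{t_i}c_{\text{M},i}^{1-t_i}$ and maximize termwise. Your explicit treatment of non-negativity and the degenerate zero-factor cases is slightly more careful than the paper's (which handles them via the convention $0^0=1$), but the underlying argument is the same.
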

\begin{proof}
First, we reformulate \eqref{eq:numerator_inner_max} as:
    \begin{flalign}\label{eq:numerator_inner_max_over_all_robots}
    &\prod_{i=1}^{N}(p_{\alpha}(a_i|t_i)P_{\text{MD,L}}^{1-y_i}(1-P_{\text{MD,L}})^{y_i})^{t_i} \cdot\nonumber\\ 
    &\qquad(p_{\alpha}(a_i|t_i)P_{\text{MD,M}}^{1-y_i}(1-P_{\text{MD,M}})^{y_i})^{1-t_i},
    \end{flalign}
where the product is calculated by going through all robots rather than going through legitimate and malicious robots separately. 
We define $$c_{\text{L},i}=p_{\alpha}(a_i|t_i)P_{\text{MD,L}}^{1-y_i}(1-P_{\text{MD,L}})^{y_i},$$ and $$c_{M,i}= p_{\alpha}(a_i|t_i)P_{\text{MD,M}}^{1-y_i}(1-P_{\text{MD,M}})^{y_i}.$$ Then, the expression in \eqref{eq:numerator_inner_max_over_all_robots} becomes:
    \begin{flalign}
    \label{eq:numerator_inner_max_over_all_robots_simplified}
    &\prod_{i=1}^{N}c_{\text{L},i}^{t_i} \cdot c_{\text{M},i}^{1-t_i}.
    \end{flalign}
Let $0\log{0}=1$, thus $0^0=1$. Then, the expression \eqref{eq:numerator_inner_max_over_all_robots_simplified} is maximized when choosing $t_i=1$ if $c_{L,i}\geq c_{M,i}$ and $t_i=0$ otherwise.
\end{proof}
As we can see from Lemma \ref{lem:lemma_P_given}, maximization with this formulation can be calculated by performing $\mathcal{O}(N)$ comparisons. Now, we consider these two perspectives together to introduce an efficient calculation of the numerator of the GLRT given in \eqref{eq:GLRT_numerator_maximization}. By Lemma \ref{lem:lemma_t_given}, we can see that the optimum value of $P_{\text{MD,M}}$ has a special structure. Exploiting this knowledge, we can restrict the set that $P_{\text{MD,M}}$ belongs to in \eqref{eq:GLRT_numerator_pma_first}. Then, the inner maximization can be calculated using Lemma \ref{lem:lemma_P_given}. The following theorem builds on this intuition to provide an efficient calculation of \eqref{eq:GLRT_numerator_maximization}.

\begin{theorem}
Assume that $(\boldsymbol{t}^*,P_{\text{MD,M}}^*)$ attains the maximization in \eqref{eq:GLRT_numerator_maximization}. Then, for each vector of measurements $\mathbf{y}$ and trust values $\mathbf{a}$, $P_{\text{MD,M}}^*$ belongs to the set $\mathcal{P}$ where
$$\mathcal{P}\triangleq\left\{\frac{T_n}{T_d}\right\}_{T_n\in\{0,\ldots,T_d\},T_d\in\{1,\ldots,N\}},$$
and $|\mathcal{P}|\leq N^2+1$. Moreover, the maximization in \eqref{eq:GLRT_numerator_maximization} can be calculated by iterating over $\mathcal{O}(N^2)$ different values in $\mathcal{P}$ and performing $\mathcal{O}(N)$ comparisons.
\label{thm:efficient_maximization}
\end{theorem}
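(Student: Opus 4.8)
The plan is to exploit the two complementary characterizations of the inner maximization from Lemmas \ref{lem:lemma_t_given} and \ref{lem:lemma_P_given}, tied together by the Principle of Iterated Suprema. Starting from the optimal pair $(\boldsymbol{t}^*,P_{\text{MD,M}}^*)$ assumed in the statement, I would first fix $\boldsymbol{t}^*$ and observe that $P_{\text{MD,M}}^*$ must then maximize the inner expression \eqref{eq:numerator_inner_max} over $P_{\text{MD,M}}\in[0,1]$. Whenever $|\{i:t_i^*=0\}|>0$, the inner objective is a unimodal function of $P_{\text{MD,M}}$ whose unique maximizer Lemma \ref{lem:lemma_t_given} pins down as
\[
P_{\text{MD,M}}^*=\frac{\sum_{i:t_i^*=0}(1-y_i)}{\sum_{i:t_i^*=0}1}.
\]
Writing $T_d=\sum_{i:t_i^*=0}1$ and $T_n=\sum_{i:t_i^*=0}(1-y_i)$, the denominator $T_d$ is an integer in $\{1,\ldots,N\}$ and the numerator $T_n$ counts a subset of those $T_d$ robots, so $T_n\in\{0,\ldots,T_d\}$; hence $P_{\text{MD,M}}^*=T_n/T_d\in\mathcal{P}$. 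In the single degenerate case $|\{i:t_i^*=0\}|=0$, the objective \eqref{eq:numerator_inner_max} does not depend on $P_{\text{MD,M}}$ at all, so $P_{\text{MD,M}}^*$ may be replaced by any element of $\mathcal{P}$ (say $0$) without changing the maximal value, and the claim is preserved.

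For the cardinality bound I would count the distinct elements of $\mathcal{P}$ directly. The value $0$ (obtained with $T_n=0$) contributes one element, and every nonzero element has the form $T_n/T_d$ with $1\le T_n\le T_d\le N$. The number of such integer pairs is $\sum_{T_d=1}^{N}T_d=\tfrac{N(N+1)}{2}$, which over-counts since different pairs can yield equal fractions. Therefore $|\mathcal{P}|\le 1+\tfrac{N(N+1)}{2}\le N^2+1$ for every $N\ge 1$, as claimed.

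For the computational claim I would assemble the two ingredients. By the first part it is enough to search for the optimal $P_{\text{MD,M}}$ over the finite set $\mathcal{P}$, of size $\mathcal{O}(N^2)$. For each candidate $P_{\text{MD,M}}\in\mathcal{P}$, Lemma \ref{lem:lemma_P_given} computes the exact inner maximizer over $\boldsymbol{t}\in\{0,1\}^N$ by forming $c_{\text{L},i}$ and $c_{\text{M},i}$ and setting $\hat t_i=1$ iff $c_{\text{L},i}\ge c_{\text{M},i}$, a total of $\mathcal{O}(N)$ comparisons, after which \eqref{eq:numerator_inner_max} is evaluated. Retaining the best objective over all candidates returns the global maximum, because the optimizer's $P_{\text{MD,M}}^*$ lies in $\mathcal{P}$ and its companion $\boldsymbol{t}^*$ is recovered (or matched in objective value) by the comparison rule at that candidate. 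This yields the advertised $\mathcal{O}(N^2)$ iterations, each costing $\mathcal{O}(N)$ comparisons.

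I expect the crux to be the first paragraph: cleanly justifying that the \emph{global} optimizer's $P_{\text{MD,M}}^*$ coincides with the per-$\boldsymbol{t}^*$ inner maximizer of Lemma \ref{lem:lemma_t_given} (which relies on that maximizer being unique when malicious robots are present), and disposing of the degenerate no-malicious-robot case so that restricting the search to $\mathcal{P}$ provably loses nothing. The counting and complexity arguments are then routine bookkeeping.
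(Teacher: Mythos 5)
Your proposal is correct and follows essentially the same route as the paper's proof: both use Lemma \ref{lem:lemma_t_given} to show that (away from the all-legitimate degenerate case) the optimal $P_{\text{MD,M}}$ is an empirical ratio lying in $\mathcal{P}$, then combine this with the Principle of Iterated Suprema and the $\mathcal{O}(N)$ inner maximization of Lemma \ref{lem:lemma_P_given} to get the $\mathcal{O}(N^2)\cdot\mathcal{O}(N)$ search. Your explicit handling of the uniqueness of the inner maximizer and the sharper count $|\mathcal{P}|\le 1+\tfrac{N(N+1)}{2}$ are minor refinements of the same argument.
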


\begin{proof}
First, we will approach the problem by rewriting it as \eqref{eq:GLRT_numerator_pma_first} using the Principle of Iterated Suprema:
$$
\begin{aligned}
\max_{P_{\text{MD,M}}\in[0,1]}\left\{\max_{\boldsymbol{t}\in\{0,1\}^N}\left\{\prod_{i:t_i=1}p_{\alpha}(a_i|t_i)P_{\text{MD,L}}^{1-y_i}(1-P_{\text{MD,L}})^{y_i} \right. \right. \cdot \\
\left. \left.\prod_{i:t_i=0}p_{\alpha}(a_i|t_i)P_{\text{MD,M}}^{1-y_i}(1-P_{\text{MD,M}})^{y_i}\right\}\right\},
\end{aligned}
$$
By Lemma \ref{lem:lemma_P_given}, we can calculate the inner maximization for a given $P_{\text{MD,M}}$. Notice that, since the calculation requires a comparison for each robot, $\mathcal{O}(N)$ comparisons need to be performed for this maximization. Now, consider the other formulation of the problem given by \eqref{eq:GLRT_numerator_t_first}. From Lemma \ref{lem:lemma_t_given}, we can see that the optimum  $P_{\text{MD,M}}$ only depends on the number of ones and zeros of malicious robots for a given $\boldsymbol{t}$. Moreover, the permutation of ones and zeros of malicious robots for a given $\boldsymbol{t}$ does not change the optimum and only the total number of ones and zeros does. We will restrict the set that the outer maximization process iterates over in \eqref{eq:GLRT_numerator_pma_first} based on this observation.
\\
Denote $$\mathcal{P}\triangleq\left\{\frac{T_n}{T_d}\right\}_{T_n\in\{0,\ldots,T_d\},T_d\in\{1,\ldots,N\}},$$
and observe that $|\mathcal{P}|\leq N^2+1$.
It follows from the Lemma \ref{lem:lemma_t_given} that for each value $\boldsymbol{t}$ in the outer maximization of \eqref{eq:GLRT_numerator_t_first}, except the case where $\boldsymbol{t}$ consist of all ones, the optimum value of $P_{\text{MD,M}}$ belongs to the set $\mathcal{P}$. Moreover, in the case where $\boldsymbol{t}$ consists of all ones, any choice of $P_{\text{MD,M}}$ maximizes the expression. Hence, without loss of generality, it is suffices to look for an optimizer $P_{\text{MD,M}}$ of \eqref{eq:GLRT_numerator_t_first} in the set $\mathcal{P}$.
Therefore, there are only  $\mathcal{O}(N^2)$ possible values that optimum $P_{\text{MD,M}}$ can take. Thus, we can reformulate \eqref{eq:GLRT_numerator_pma_first} as:
$$
\begin{aligned}
\max_{P_{\text{MD,M}}\in\mathcal{P}}\left\{\max_{\boldsymbol{t}\in\{0,1\}^N}\left\{\prod_{i:t_i=1}p_{\alpha}(a_i|t_i)P_{\text{MD,L}}^{1-{y_i}}(1-P_{\text{MD,L}})^{{y_i}} \right. \right. \cdot \\
\left. \left.\prod_{i:t_i=0}p_{\alpha}(a_i|t_i)P_{\text{MD,M}}^{1-{y_i}}(1-P_{\text{MD,M}})^{{y_i}}\right\}\right\},
\end{aligned}
$$
Therefore, this maximization can be calculated by iterating over $\mathcal{O}(N^2)$ different values of $P_{\text{MD},\text{M}}$ and for each value, performing $\mathcal{O}(N)$ comparisons. A similar approach can be adapted for calculating the denominator as well.
\end{proof}
Now, using Theorem \ref{thm:efficient_maximization}, we introduce the algorithm A-GLRT, which makes a decision based on the GLRT given by \eqref{eq:GLRT_ML_decision_independent}.\\
\begin{corollary}
The GLRT given by \eqref{eq:GLRT_ML_decision_independent} can be calculated by Algorithm \ref{alg:A-GLRT} which is referred as the A-GLRT algorithm. The A-GLRT algorithm requires $\mathcal{O}(N^3)$ comparisons.
\end{corollary}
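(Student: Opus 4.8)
The plan is to establish the two claims in the corollary separately: that Algorithm~\ref{alg:A-GLRT} correctly evaluates the test \eqref{eq:GLRT_ML_decision_independent}, and that it does so using $\mathcal{O}(N^3)$ comparisons. The algorithm has three stages: compute the numerator of \eqref{eq:GLRT_ML_decision_independent}, compute the denominator, then form the ratio and compare it against $\gamma_{\text{AG}}$ to output $\mathcal{H}_0$ or $\mathcal{H}_1$. Correctness of the numerator maximization is inherited directly from the earlier results: Theorem~\ref{thm:efficient_maximization} guarantees that an optimal $P_{\text{MD,M}}^*$ lies in the finite set $\mathcal{P}$, so sweeping the outer variable over $\mathcal{P}$ loses no optimality, while Lemma~\ref{lem:lemma_P_given} supplies, for each fixed $P_{\text{MD,M}}\in\mathcal{P}$, the exact inner maximizer $\hat{\boldsymbol{t}}$ through the per-robot comparison $c_{\text{L},i}\geq c_{\text{M},i}$. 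By the Principle of Iterated Suprema the resulting nested maximum equals the joint maximum in \eqref{eq:GLRT_numerator_maximization}, establishing correctness of the numerator; the denominator is handled identically.

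For the running time I would first count the numerator. By Theorem~\ref{thm:efficient_maximization} the outer loop ranges over the $\mathcal{O}(N^2)$ values of $\mathcal{P}$, and for each value Lemma~\ref{lem:lemma_P_given} requires one comparison per robot, i.e.\ $\mathcal{O}(N)$ comparisons. This yields $\mathcal{O}(N^2)\cdot\mathcal{O}(N)=\mathcal{O}(N^3)$ comparisons to evaluate the numerator.

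Next I would dispatch the denominator by the symmetry noted after Theorem~\ref{thm:efficient_maximization}. Swapping $\mathcal{H}_1$ for $\mathcal{H}_0$ and $P_{\text{MD,M}}$ for $P_{\text{FA,M}}$ interchanges the roles of $y_i$ and $1-y_i$ in the likelihood \eqref{eq:assumption_y_given_h0_t}, and the analogue of Lemma~\ref{lem:lemma_t_given} gives the MLE $\widehat{P}_{\text{FA,M}}=\frac{\sum_{i:t_i=0}y_i}{\sum_{i:t_i=0}1}$, which is again a ratio $T_n/T_d$ belonging to the same set $\mathcal{P}$. Hence the denominator costs another $\mathcal{O}(N^3)$ comparisons, the final ratio-and-threshold comparison is $\mathcal{O}(1)$, and the total is $\mathcal{O}(N^3)+\mathcal{O}(N^3)+\mathcal{O}(1)=\mathcal{O}(N^3)$.

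The step I expect to need the most care is confirming that the denominator's candidate set of false-alarm values coincides with $\mathcal{P}$ rather than some different family; this is precisely what lets a single $\mathcal{O}(N^3)$ bound cover both maximizations. The asymptotic accounting itself is routine once that symmetry is pinned down, so the corollary then follows by summing the two dominant $\mathcal{O}(N^3)$ contributions.
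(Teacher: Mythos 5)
Your proposal is correct and follows essentially the same route as the paper's proof: it invokes Theorem~\ref{thm:efficient_maximization} for the $\mathcal{O}(N^2)$ sweep over $\mathcal{P}$ with $\mathcal{O}(N)$ comparisons per candidate, handles the denominator by the same symmetry (the paper simply says ``similarly'' where you explicitly verify that $\widehat{P}_{\text{FA,M}}$ is also a ratio in $\mathcal{P}$), and adds the $\mathcal{O}(1)$ final threshold comparison. The extra care you take with the denominator's candidate set is a sound elaboration of a step the paper leaves implicit, not a different argument.
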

\begin{proof}
Calculation of the maximization in the numerator can be calculated in $\mathcal{O}(N^2)$ iterations and performing $\mathcal{O}(N)$ comparisons at each iteration as described by Theorem \ref{thm:efficient_maximization}. Therefore, it requires $\mathcal{O}(N^3)$ comparisons in total. Similarly, maximization of the denominator requires the same amount of computation and can be calculated in a similar manner using $P_{\text{FA,M}}$ instead of $P_{\text{MD,M}}$. After that, a final comparison is made by comparing the ratio of the numerator and denominator with $\gamma_{\text{AG}}=\frac{\Pr(\Xi=0)}{\Pr(\Xi=1)}$. Algorithm \ref{alg:A-GLRT} follows these steps, therefore, it requires $\mathcal{O}(N^3)$ comparisons in total.
\end{proof}

\begin{algorithm}[h!]
\caption{A-GLRT \\ Input: $\mathbf{y}$, $\mathbf{a}$, $P_{\text{FA,L}}$, $P_{\text{MD,L}}$, $\Pr(\Xi=0)$, $\Pr(\Xi=1)$, $p_{\alpha}(a_i|t=1)$, $p_{\alpha}(a_i|t=0)$, N \\ Output: Decision $\mathcal{H}_0$ or $\mathcal{H}_1$}
\label{alg:A-GLRT}
\begin{algorithmic}[1]

\State Set $\mathcal{P} = \left\{\frac{T_n}{T_d}\right\}_{T_n\in\{0,\ldots,T_d\},T_d\in\{1,\ldots,N\}}$.

\State Set $\gamma_{\text{AG}} = \frac{\Pr(\Xi=0)}{\Pr(\Xi=1)}$.

\State Set $l_\text{num,max} = 0, l_\text{denom,max} = 0$.

\ForAll{$P_{M} \in \mathcal{P}$}

\State Set $P_{\text{MD,M}} = P_{M}, P_{\text{FA,M}} = P_{M}$.

\State Set $l_\text{num} = 1, l_\text{denom} = 1$.

\For{i=0 to N}

\State Set $c_{\text{L},i}=p_{\alpha}(a_i|t_i=1)P_{\text{MD,L}}^{1-y_i}(1-P_{\text{MD,L}})^{y_i}$.

\State Set $c_{\text{M},i}=p_{\alpha}(a_i|t_i=0)P_{\text{MD,M}}^{(1-y_i)}(1-P_{\text{MD},\text{M}})^{y_i}$.

\If {$c_{\text{L},i} \geq c_{\text{M},i}$}

\State Set $l_\text{num} = l_\text{num}\cdot c_{\text{L},i}$.

\Else

\State Set $l_\text{num} = l_\text{num}\cdot c_{\text{M},i}$.

\EndIf

\EndFor

\If {$l_\text{num} > l_\text{num,max}$}

\State Set $l_\text{num,max} = l_\text{num}$.

\EndIf

\State Repeat the steps 7-18 for the denominator.

\EndFor

\If {$\frac{l_\text{num,max}}{l_\text{denom,max}} > \gamma_{\text{AG}}$}
    \State Return decision $\mathcal{H}_1$ 
\Else
    \State Return decision $\mathcal{H}_0$ 
\EndIf

\end{algorithmic}
\end{algorithm}
Finally, we investigate how the measurements $\boldsymbol{y}$ and stochastic trust values $\boldsymbol{\alpha}$ are being used by the A-GLRT algorithm. Considering \eqref{eq:numerator_inner_max_over_all_robots}, an equivalent decision rule to the one derived in Lemma \ref{lem:lemma_P_given} is given as:
\begin{flalign}\label{eq:lemma_P_given_alternative_formulation_num}
\frac{p_{\alpha}(a_i|t_i=1)}{p_{\alpha}(a_i|t_i=0)}\underset{\hat{t}_i=0}{\overset{\hat{t}_i=1}{\begin{smallmatrix}\geqslant\\<\end{smallmatrix}}} \frac{P_{\text{MD,M}}^{1-y_i}(1-P_{\text{MD,M}})^{y_i}}{P_{\text{MD,L}}^{1-y_i}(1-P_{\text{MD,L}})^{y_i}}.
\end{flalign}

With this new perspective, we can gain more insights about the A-GLRT. First, we can see that the A-GLRT is essentially performing a likelihood ratio test with $\alpha$ values for each robot to decide if they are legitimate or not using different threshold values based on the measurement coming from that robot. For now, let's assume that $P_{\text{MD,M}}$ is not 0 or 1. Then, we can see that as $\alpha$ values become more accurate, meaning that the ratio $\frac{p_{\alpha}(a_i|t_i=1)}{p_{\alpha}(a_i|t_i=0)}$ approaches infinity if $t_i=1$ or approaches zero otherwise, for all values that $\alpha_i$ can take, the finite threshold value becomes insignificant and the decision is made using $\alpha$ values only. This situation agrees with the intuition as well since $\alpha$ values would become true indicators of robot identities.

\section{Hardware Experiment and Numerical Results}
\label{sec:Results}
We perform a hardware experiment with robotic vehicles driving on a mock-up road network where robots are tasked with reporting the traffic condition of their road segment to a FC. The objective of the malicious robots is to cause the FC to incorrectly perceive the traffic conditions (see Fig.~\ref{fig:hardware_setup}). A numerical study further demonstrates the performance of this scenario with an increasing proportion of malicious robots.

We compare the performance of the 2SA and A-GLRT against several benchmarks including the \emph{\textbf{Oracle}}, where the FC knows the true trust vector $\mathbf{t}$ and discards malicious measurements, (this serves as a lower bound on the probability of error), the \emph{\textbf{Oblivious FC}}, where the FC treats every robot as legitimate, and a \emph{\textbf{Baseline Approach}} \cite{rawat2010collaborative} where the FC uses a history of $T$ measurements to develop a reputation about each robot. The Baseline method ignores information from robots whose measurements disagree with the final decision at least $\eta < T$ times. The \emph{Oracle}, \emph{Oblivious FC}, and \emph{Baseline Approach} use the decision rule in \eqref{eq:detection_p_e_FC_legitimate_assumption}. Malicious robots perform a Sybil attack where they spoof additional robots into the network. We use the opensource toolbox in \cite{WSR_toolbox} to obtain trust values from communicated WiFi signals by analyzing the similarity between different fingerprints to detect spoofed transmissions. The works in \cite{yemini2021characterizing,AURO,CrowdVetting} model these trust values $\alpha_i \in [0,1]$ as a continuous random variable. We discretize the sample space by letting $\mathcal{A} = \{0,1\}$ and setting $a_i = 1$ if the measured trust value is $\geq 0.5$ and $a_i = 0$ otherwise.

\paragraph{\textbf{Hardware Experiment}}

\begin{figure}[t!]
    \centering
    \includegraphics[scale=0.35]{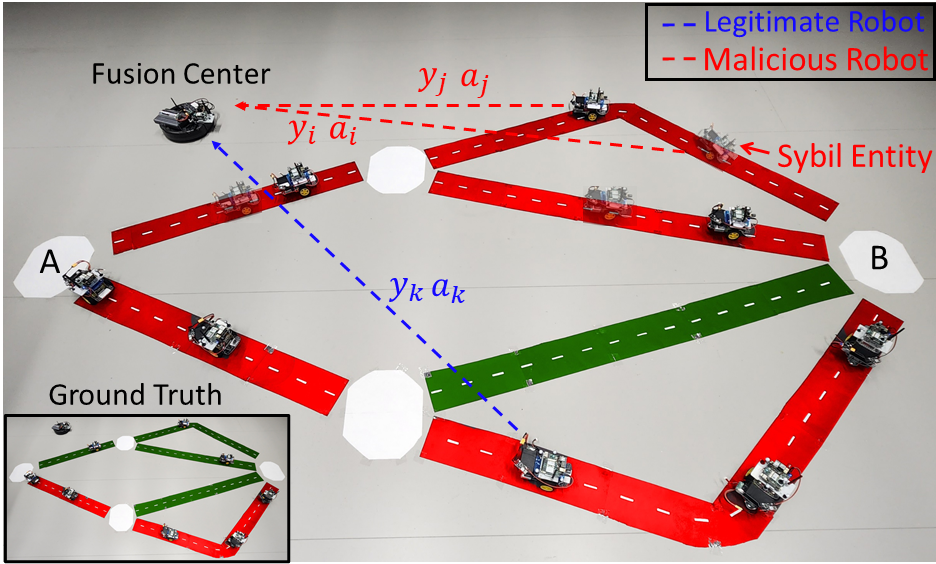}
    \caption{Robots drive along a roadmap comprised of six road segments to get from point A to point B. While traversing the roadmap, robots estimate the congestion on their current road segment as either containing traffic (red) or not (green), and relay their estimates to the FC. All robots relay messages to the FC, but only a few are depicted on the figure for ease of readability.}
    \label{fig:hardware_setup}
\end{figure}

A group of $N = 11$ mobile robots drive in a loop from a starting point A to point B, approximately $4.5$ meters apart, by traversing one of four possible paths made up of six different road segments. As the robots drive between points A and B they are given noisy position information for themselves and neighboring robots from an OptiTrack motion capture system with added white Gaussian noise with a variance of $1m^2$. This serves as a proxy for GPS-reported measures used in crowdsourcing traffic estimation schemes like Waze, Google Maps, and others. A road segment is considered to have traffic ($y_i = 1$) if the number of robots on the segment is $\geq 2$. Of the $11$ robots in the group, $5$ robots are legitimate, $3$ are malicious, and $3$ are spoofed by the malicious robots (making them also malicious). Malicious robots know the true traffic conditions and report the wrong measurement with probability $0.99$, i.e., $P_{\text{FA,M}} = P_{\text{MD,M}} =0.99$. The empirical data from the experiment is stated in \cref{tab:experiment}, where \emph{Baseline1} and \emph{Baseline5} refer to the Baseline Approach from \cite{rawat2010collaborative} with parameters $T$ and $\eta$ set to ($T = 1$, $\eta = 0.5$) and ($T = 5$, $\eta = 2.5$). We determined the parameters in \cref{tab:experiment} by first running an experiment without performing hypothesis tests and observing the behavior of the system compared to ground truth. The trust values gathered using the toolbox in \cite{WSR_toolbox} led to the empirical probabilities $p_{\alpha}(a_i=1|t_i=1) = 0.8350$ and $p_{\alpha}(a_i=1|t_i=0) = 0.1691$ (see \cref{fig:histograms}).

In our hardware experiment the 2SA and A-GLRT outperform the Oblivious FC and the Baseline Approach. The Baseline Approach exhibits a high percent error due to the fact that it relies on the majority of the network being legitimate. Since $6$ out of $11$ robots are malicious, it is likely that many hypothesis tests are conducted where the majority is malicious. This points to a common vulnerability of reputation based approaches that assume only a small proportion of the network is malicious.

\begin{table}[t!]
    \centering
    \begin{tabular}{|c|c|c|c|} \hline
    \multicolumn{4}{|c|}{\textbf{\textit{Parameters}}} \\ \hline
    $P_{\text{FA,L}}$  & 0.0800 & $P_{\text{MD,L}}$ & 0.2100 \\ \hline  $\Pr(\Xi=0)$  & 0.6432 & $\Pr(\Xi=1)$ & 0.3568 \\ \hline \multicolumn{4}{|c|}{\textbf{\textit{Percent Error}}} \\ \hline \textbf{2SA (Sec. \ref{sec:2SA})} & \textbf{30.5} \% &\textbf{A-GLRT (Sec. \ref{sec:GLRT})} & \textbf{29.0} \% \\ \hline Oracle & 19.5 \% & Oblivious FC & 52.0 \% \\ \hline Baseline1 & 50.8 \% & Baseline5 & 49.1 \% \\ \hline
    \end{tabular}
    \caption{Experimental Results}
    \label{tab:experiment}
\end{table}
\begin{figure}[t!]
    \centering
    \includegraphics[scale=0.28]{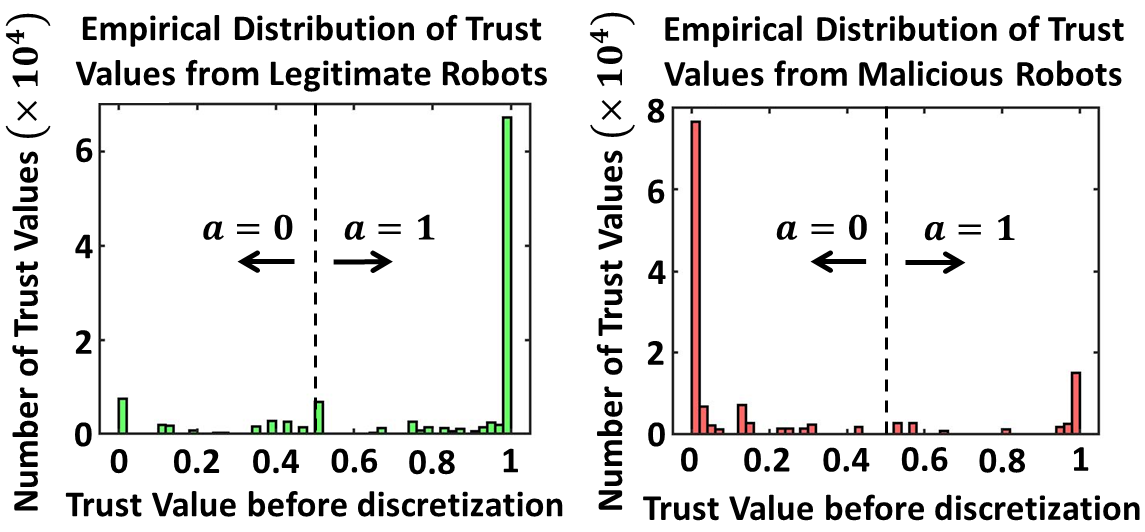}
    \caption{Empirical distribution of the trust values gathered during the hardware experiment for legitimate and malicious robots. The trust value is thresholded to $a=1$ if it is $\geq 0.5$, and $a=0$ otherwise.}
    \label{fig:histograms}
\end{figure}

\paragraph{\textbf{Numerical Study}}

Next, we perform a numerical study on the performance of each approach when the proportion of malicious robots is varied. In the numerical study we use $N = 10$ robots with $\Pr(\Xi = 0) = \Pr(\Xi = 1) = 0.5$, $P_{\text{FA,L}} = P_{\text{MD,L}} = 0.15$, and $P_{\text{FA,M}} = P_{\text{MD,M}} = 0.99$ and perform hypothesis tests over $1000$ trials for each proportion of malicious robots. In the simulation study the trust value distributions are fixed at $p_{\alpha}(a_i = 1|t_i = 1) = 0.8$, $p_{\alpha}(a_i = 1|t_i = 0) = 0.2$, and the proportion of malicious robots varies from $0$ to $1$. The results of the simulation study are plotted in \cref{fig:sim_study}. From the plot it can be seen that the 2SA and the A-GLRT perform well even after the number of malicious robots exceeds majority since they use additional trust information independent of the data, whereas the Baseline Approaches fail since they use only the data to assess the trustworthiness of the robots. Additionally, there exists a critical proportion of malicious robots, 
beyond which
the 2SA chooses to ignore most of the measurements it receives and the decision rule becomes more dependent on the prior probabilities $\Pr(\Xi=0)$ and $\Pr(\Xi=1)$.
\begin{figure}[t!]
    \centering
    \includegraphics[scale=0.6]{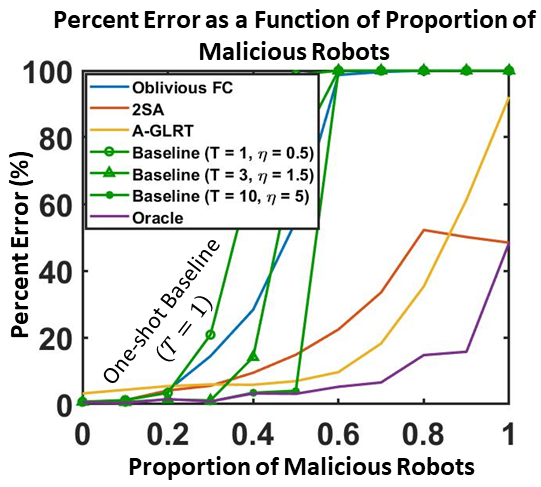}
    \caption{The percent error for multiple hypothesis test approaches when the proportion of malicious robots is varied. The 2SA and A-GLRT outperform the Oblivious FC and Baseline Approaches when the majority of the network is malicious.}
    \label{fig:sim_study}
\end{figure}

\addtolength{\textheight}{-4cm}   

\section{Conclusion}
\label{sec:conclusion}

In this paper we present two methods to utilize trust values in solving the binary adversarial hypothesis testing problem. The 2SA uses the trust values to determine which robots to trust, and then makes a decision from the measurements of the trusted robots. The A-GLRT jointly uses the trust values and measurements to estimate the trustworthiness of each robot, the strategy of malicious robots, and the true hypothesis.

\appendix
\subsection{Proof of \cref{lem:P_FA_M}} \label{sec:proof_P_FA_M}

\begin{proof}
 Recall the false alarm and missed detection probabilities for the FC using decision rules \eqref{eq:classification_decision} and \eqref{eq:detection_p_e_FC_legitimate_assumption} that lead to the overall false alarm and missed detection probabilities stated in \eqref{eq:fa_md_total}.

Next, we show that the false alarm probability \eqref{eq:fa_md_total} is maximized when $P_{\text{FA,M}} = 1$. The proof for $P_{\text{MD,M}}$ is analogous. In order to maximize $P_{\text{FA}}$ in \eqref{eq:fa_md_total} the summation must be maximized. We rewrite the summation by separating it into the terms affected by legitimate robots that were trusted and those affected by malicious robots that were trusted
\begin{equation}
\begin{aligned}
    &\sum_{i : \{ \hat{t}_i=1, t_i=0 \}}[w_{1,\text{L}} y_j - w_{0,\text{L}}(1-y_j)] + \\ &\sum_{i : \{ \hat{t}_i=1, t_i=1 \}} [w_{1,\text{L}} y_i - w_{0,\text{L}}(1-y_i)].
    \label{eq:lem_proof_P_FA_M_summation}
\end{aligned}
\end{equation}
Any robot $j \in \{ \hat{\mathcal{L}}\cap\mathcal{M} \}$ can maximize \eqref{eq:lem_proof_P_FA_M_summation} by maximizing $[w_{1,\text{L}} y_j - w_{0,\text{L}}(1-y_j)]$. Note that when $P_{\text{FA,L}} < 0.5$ and $P_{\text{MD,L}} < 0.5$ then $w_{1,\text{L}} > 0$ and $w_{0,\text{L}} > 0$. Thus, $[w_{1,\text{L}} y_j - w_{0,\text{L}}(1-y_j)]$ is maximized when $y_j = 1$ since $Y_j \in \{0,1\}$. Given the true hypothesis is $\mathcal{H}_0$, the measurement $Y_j = 1$ occurs when robot $j$ reports a false alarm. Therefore, the probability that robot $j$ reports $Y_j = 1$ is maximized when the probability of false alarm is maximized:
\begin{equation}
    \Pr(Y_j = 1 | \mathcal{H}_0) = P_{\text{FA,M}} = 1.
\end{equation}
\end{proof}


\bibliographystyle{IEEEtran}

\end{document}